\titlespacing\section{0pt}{4pt plus 4pt minus 2pt}{4pt plus 2pt minus 2pt}
\titlespacing\subsection{0pt}{4pt plus 4pt minus 2pt}{4pt plus 2pt minus 2pt}
\titlespacing\subsubsection{0pt}{4pt plus 4pt minus 2pt}{4pt plus 2pt minus 2pt}
\newtheorem{theorem}{Theorem}
\newtheorem{definition}[theorem]{Definition}
\newtheorem{lemma}[theorem]{Lemma}
\newtheorem{corollary}[theorem]{Corollary}
\newtheorem{proposition}[theorem]{Proposition}
\newcounter{fncntr}
\newcommand\algoLUU[0]{\textproc{Learn-Utility} }
\title{An Online Algorithm for Learning Buyer Behavior under Realistic Pricing Restrictions%\vspace*{-5cm}
}
\author{
Debjyoti Saharoy $^1$ \normalfont{and} 
\textbf{Theja Tulabandhula} $^2$,   
\\ 
University of Illinois at Chicago.\\
\texttt{ \{$^1$dsahar2, $^2$theja\}@uic.edu.}
}
\begin{document}
\setlength{\abovedisplayskip}{0pt}
\setlength{\belowdisplayskip}{0pt}
\setlength{\abovedisplayshortskip}{0pt}
\setlength{\belowdisplayshortskip}{0pt}

\maketitle

\begin{abstract}
  We propose a new efficient online algorithm to learn the parameters governing the purchasing behavior of a utility maximizing buyer, who responds to prices, in a repeated interaction setting. The key feature of our algorithm is that it can learn even non-linear buyer utility while working with arbitrary price constraints that the seller may impose. This overcomes a major shortcoming of previous approaches, which use unrealistic prices to learn these parameters making them unsuitable in practice. 
\end{abstract}

\section{Introduction}
\label{sec:introduction}

Modeling the arrival and response behavior of a buyer to a collection of items sold by a seller has a rich history in operations management\cite{cohen2016feature} and machine learning~\cite{kleinberg2003value,amin2014repeated}, and helps answer questions such as: what assortment of items should a seller show a prospective buyer? How should she price them? Much work in this area can be divided into two categories: (1) explicitly learning the purchase model, and (2) maximizing the revenue or some other function given a behavior model. 

In particular, online problems in the latter category ~\cite{besbes2015surprising,chakraborty2009dynamic,alaei2011bayesian,cai2011extreme,blum2011welfare}, instead of learning the buyer behavior, optimizes what is known as \emph{regret}, which is the difference between what the seller could have done in hindsight compared to what they did in a sequence of interactions with the buyer. Although the regret setting is appealing, the techniques and the corresponding algorithms tend to be very specialized (except for perhaps the simplest cases) and lacks universality. In particular, many of the general purpose algorithms (such as Thompson Sampling and UCB) depend linearly on the number of actions, which is not-ideal when the action space is large or infinite (as is the case for us). Specialized analysis or algorithms address this dependence issue but depend heavily on the structure of the objective and the decision problem. Further, if the objective or the decision structure changes, either because of business considerations or as new business logic is introduced, one has to design new algorithms from scratch. Thus, it is economical and convenient to decouple the estimation problem from the decision making problem and explicitly estimate the parameters of buyer behavior first  
(also called \emph{pure exploration}).

%benefit of realism
There is a recent line of work on \textit{learning the behavior of buyers} online~\cite{Balcan2014,beigman2006learning,bei2016learning}. Compare to these works, our algorithm does not share a key shortcoming, which is the necessity of posting unrealistic prices in the process of learning. Note that, learning buyer behavior in the offline (batch data) setting has also been addressed in recent works. For instance, in \cite{benkdd16},  the authors learn the parameters of a particular buyer behavior model that considers preference lists. We believe the online setting is relatively more interesting because there is scope for real-time personalization tailored to each individual buyer compared to  the offline setting. 

%buyer models 
In this paper we consider a buyer behavior where the buyer's objective is sensitive to prices. This type of sensitivity to prices to prices has been considered in the online setting~\cite{STOC16p949} in the context of regret minimizing profit maximization, as well as in the offline setting~\cite{benkdd16}. In each interaction, the seller prices a collection of items and the buyer responds by purchasing various quantities of each item that maximizes her objective/utility. 
Previously proposed algorithms resort to posting unrealistic prices to induce the buyer to buy/not-buy certain items. Our algorithm eliminates this shortcoming by learning the buyer behavior while being constrained to post prices from a predefined set of realistic prices (described in Section \ref{sec:realistic-prices}), which is provided as an input. 
The practical motivation for such a constraint to be imposed is straightforward: prices of items in many commercial settings are only allowed to vary between  realistic lower and upper bounds. 
This is because of business constraints and prior knowledge on the market value of goods and items. More involved constraints include bundle prices (where prices are tied to each other) and promotion/discount prices that are also specified by business rules. Further, all such constraints can vary arbitrarily over time. 

The fact that our algorithm can learn despite such pricing constraints makes it practical and applicable in real scenarios. Internally, it exploits the concavity property of the buyer's objective and uses projected gradient descent to shrink an uncertainty ellipsoid around the true buyer model parameters. 

\subsubsection{Our Technical Contribution}
\begin{enumerate}
\item Unlike Algorithm 2 in~\cite{Balcan2014} (in a buyer model related to the one considered here) which breaks when prices are restricted, our online learning algorithm is able to make progress. Specifically, it gets $\epsilon$-optimal estimates in roughly $O(n^2 \log \frac{n}{\epsilon})$ interactions even for non linear buyer utility (versus $\tilde{O}(n)$ taken by the former to learn linear utility). Thus, we provide a bound (Theorem \ref{thm:stackelberg_utility_learn}) on the number of interactions our algorithm needs for learning the buyer parameters. Reducing the number of interactions is important because while learning, the algorithm is agnostic to the revenue generated.

\item The key feature of our algorithm is that it searches for "realistic" prices (Lemma \ref{lemma:lemma_from_STOC16_adaptation}) that induce purchase of specific target bundles (Theorem \ref{lemma:compute g(p)}), and creates hyperplanes corresponding to these price vectors to sequentially split the uncertainty set over the buyer's parameters.
\end{enumerate}

\subsubsection{Comparison With Previous Work}
From ~\cite{STOC16p949}, we re-purpose the use of a gradient descent based technique (used in solving the convex program in Equation (\ref{eq:reformulated_buyer})) for interacting with the buyer. While they do not need any specific variant of the gradient descent algorithm, we explicitly choose a certain step rule (constant step length) to bound our learning errors. Our own contribution here is the use of these gradient descent moderated interactions in a "realistic" price space (an additional caveat) to enable the seller
(1) approximately learn the "value" of the goods purchased by the buyer without knowing his "inherent" utility (Lemma \ref{lemma:lemma_from_STOC16_adaptation}, Theorem \ref{lemma:compute g(p)}) and, 
(2) split the uncertainty ellipsoids, whereas ~\cite{STOC16p949} use such interactions for solving a specific structured Stackelberg game (requiring very different tools and techniques in the process).

Similar to ~\cite{cohen2016feature}, we use two specific eigenvalue lower bounding lemmas (see Lemma~\ref{cohen_lemma4} and ~\ref{cohen_lemma5}) to bound the number of rounds of interaction needed by our algorithm.
Their proposed algorithm, which essentially does a multidimensional binary search for the best price to post in each round can break if there is a coupling across items. On the other hand, the search procedure our algorithm follows can handle such coupling as the ellipsoidal procedure searches for parameters related to the entire universe of items.
Another key difference between our setting and ~\cite{cohen2016feature} is that we work with multiple items in each round in contrast to their single item setting. While using an ellipsoid to represent uncertainty in parameter estimation, the cut direction and the hyperplane placement is straightforward in ~\cite{cohen2016feature}. On the other hand, in our algorithm, in the presence of realistic pricing constraints we carefully choose the cut direction as well as position the separating hyperplane by solving the dual of a specific optimization problem using projected gradient descent.
Finally, note that their problem is a version of the contextual bandit problem for which general purpose algorithms are already available, whereas our problem is not a contextual bandit instance. 
Further, we note that algorithms in both ~\cite{STOC16p949} and ~\cite{cohen2016feature} cannot be easily extended to the realistic prices setting (defined in Section~\ref{sec:realistic-prices}), which is our key emphasis here.

For the buyer models that we consider, the utility $U(x)$ need not be linear in the bundle, so even polynomial utility functions can be learned as long as certain conditions mentioned in the assumptions of Section~\ref{sec:realistic} are met. This makes our algorithm and its analysis in Section~\ref{sec:realistic} more generally applicable. 

\section{Realistic Prices}
\label{sec:realistic-prices}

Here we define what we mean by realistic prices which will constraint the prices that the seller can set while learning the buyer model (Section~\ref{sec:realistic}).

%Definition
The price $p_i$ of an item $i$ is realistic if it lies within the interval $[p^0_i-\delta_i, p^0_i+\delta_i]$, where $p^0_i$ is the median price point and $2\delta_i$ is the length of the interval (without loss of generality, we can assume symmetry here). This leads to an $n$-orthotope, which is defined as follows:
\begin{definition}\label{def:realistic-price}
A set of prices is said to be realistic if it is of the following form: 
\begin{equation}
\label{eq:realistic_price2}
\mathcal{P} = \left\lbrace p \in \mathbb{R}^n_+ \;|\; \lVert S^{-1} (p-p^0)\rVert_\infty \leq 1 \right\rbrace,
\end{equation}
where $p^0 \in \mathbb{R}^n_+$ is the median price point, $\Delta = [\delta_1 \cdots \delta_n]^T \geq \mathbf{0}$, 
and $S=\text{diag}(\delta_1,\cdots,\delta_n)$
\begin{comment}
\[
  S =
  \begin{bmatrix}
    \delta_1 & & \\
    & \ddots & \\
    & & \delta_n
  \end{bmatrix}
\]
\end{comment}
is its corresponding diagonal matrix. The length of the realistic price interval for each item $i \in [n]$ is thus $2\delta_i$.
\end{definition}

\begin{comment}

Thus, we consider prices that belong to a transformed $\lVert \cdot \rVert_{\infty}$ norm-ball (scaled by matrix $S$ and translated by $p^0$). While price covariation constraints are not captured in this definition, suitable affine transformations and other convex bodies such as ellipsoids can also be be used to restrict prices. 
\end{comment}
For analysis, we will assume that the set $\mathcal{P}$ is enclosed in a Euclidean ball of radius $R$.
\begin{comment}
%practical motivation
The practical motivation for such a constraint to be imposed while learning the buyer behavior model is straightforward: prices of items in many commercial settings are only allowed to vary between  realistic lower and upper bounds (for instance, setting price as $0$ or $\infty$ is impractical). This is because of business constraints and because of prior knowledge on the market value of goods and items. More involved constraints include bundle prices (where prices are tied to each other) and promotion/discount prices that are also specified by business rules. Further, all such constraints can vary arbitrarily over time. We do not make any strong assumptions on their description and only consider them as exogenous inputs for our algorithm. 

%previous algos don't work
Restricting prices has a profound impact on algorithms that have been previously proposed in the literature. For instance, Algorithm 2 in~\cite{Balcan2014} (that addresses learning a slightly different variant of the buyer model considered here) breaks down when one restricts the prices to $\mathcal{P}$. As we show in the subsequent section, our algorithm is able to make progress and outputs an uncertainty set that contains the true parameter of the buyer model even when prices are restricted.

\end{comment}
%\onecolumn
\section{Buyer Model}
\label{sec:realistic}
%Buyer
We represent a bundle of goods $x\in C\subseteq [0,1]^n$ (where $C$ is the feasible set) by a vector representing what fraction of each of the $n$ goods is purchased. The prices are represented by a vector $p = (p_1, \cdots, p_n) \in \mathbb{R}^n$. The price of a bundle $x$ is simply $p^Tx=\sum_{i=1}^{n} p_i \cdot x_i$. When the buyer is provided with a price vector $p$, she buys the \emph{tie-breaking} utility maximizing bundle, $x^*(p)$, which is the optimal solution of the following optimization program:
\begin{equation}
\label{eq:Buyer's Problem2}
\begin{array}{ll@{}ll}
	x^*(p)\; =\; \arg\max_{x \in C}  &U(x) + \frac{4}{\mu}\left(\mathlarger \sum \limits_{i=1}^{n} \sqrt{x_i}\right) - p^Tx.&\\
\end{array}
\end{equation}

Ideally, a utility maximizing buyer would maximize $U(x) - p^Tx$, where $U:[0,1]^n \rightarrow \mathbb{R}$ specifies their utility for each possible bundle. Since this could potentially lead to multiple optimal bundles (e.g., when $U$ is not strictly concave), we add a tie-breaking perturbation to the original utility function to introduce consistency in the buyer's decision making process. That is, we model the buyer's effective utility function as $U'(x) = U(x) + \frac{4}{\mu}\left(\mathlarger \sum \limits_{i=1}^{n} \sqrt{x_i}\right)$, where $\mu$ is a positive constant. There is nothing special about the choice of the tie breaking function, and many other choices can also be used to make the solution unique (for instance, we can use the Cobb-Douglas function as well). The solution $x^*(p)$ is called the induced bundle at prices $p$. 
%Here $\text{dom }U':= \{x \in \mathbb{R}^n \;|\; U'(x) < \infty \}$. 
The seller's goal is to learn the parameters of the function $U(.)$ by observing the bundles bought in a sequence of interactions, where the seller chooses \emph {realistic} prices of items in each interaction. The complexity of any learning algorithm in this setting is typically the number of interactions the seller makes with the buyer to learn the parameters with sufficient accuracy.  

%Assumptions
\noindent \textbf{Assumptions:} We assume that the seller knows the set $C$ of feasible bundles. This is a mild condition, and can be mined from historical purchase data. The set $C \subseteq \text{dom } U'$ is assumed to be non-empty, compact and convex and $\forall x \in C$, $\lVert x \rVert_1 \leq \gamma_1$  and $\lVert x \rVert_2 \leq \gamma_2$ (here $\lVert a \rVert_q$ refer to the $\ell_q$-norm of vector $a$). We further assume that $C = \{ x^TPx + 2q^Tx + r \leq 0 \} $ , where $P \in \mathbf{S}^n, q\in\mathbf{R}^n, r \in \mathbf{R}$ for computational tractability of a program. 
\begin{comment}
(defined later in Equation (\ref{eq:eig_val_prob})).
\end{comment} 
%To theoretically guarantee the accuracy with which our algorithm learns the parameters of the buyer's utility function $U(\cdot)$, we assume the set of feasible bundles, $C$, is defined by the following quadratic constraint:  $C = \{ x^TPx + 2q^Tx + r \leq 0 \} $ , where $P \in \mathbf{S}^n, q\in\mathbf{R}^n, r \in \mathbf{R}$. Note that this assumption is only to ensure "computational tractability" of an optimization program (defined in Equation (\ref{eq:eig_val_prob})) that our learning algorithm needs to solve. 
This does not affect the learning complexity as even without this assumption the program can be solved by performing an exhaustive grid search.

 We also assume that the seller does not know the exact tie breaking parameter $\mu$ that the buyer uses, but knows an upper and lower bound on it i.e., $\mu \in [\mu_1,\mu_2]$. We assume tie-breaking to be the only effect of such a function and that its functional form is known beforehand.

To ensure computational tractability of the buyer's problem in Equation (\ref{eq:Buyer's Problem2}), we make some generic assumptions about the buyer's utility function. Namely, we assume $U(.)$ is concave on the feasible set $C$. Also, let $U(x)$ for each $x \in C$ be non-negative and non-decreasing. Since the tie breaking perturbation $\frac{4}{\mu}\left(\mathlarger \sum \limits_{i=1}^{n} \sqrt{x_i}\right)$ is also non-negative and non-decreasing, so $U'(x)$ non-negative and non-decreasing. 

Further, since $U(x)$ is concave on $C$, $U'(x) = U(x) + \frac{4}{\mu}\left(\mathlarger \sum \limits_{i=1}^{n} \sqrt{x_i}\right)$, is $\frac{1}{\mu}$-strongly concave 
\begin{comment}
(see Proposition \ref{perturbation} in Appendix \ref{sec:appendix:c0})
\end{comment} 
on the set $C$ with respect to $\lVert \cdot \rVert_2$ norm. In other words, the buyer's problem defined in Equation (\ref{eq:Buyer's Problem2}) is a maximization of a strongly concave function over a convex set $C$. Hence $x^*(p)$ exists for every $p \in \mathbb{R}^n$ and is unique (follows from strong concavity). We also assume that the utility function of the buyer, $U(x)$, is $(\lambda_{val},\beta)$-H\"{o}lder continuous 
%(defined in Appendix \ref{sec:appendix:c0}) 
with respect to the $\lVert \cdot \rVert_2$ norm $-$ for all $x,x' \in C$. Thus we have, 
$\lvert U(x) - U(x')\rvert \leq \lambda_{val}\cdot \lVert x - x'\rVert_2^{\beta}.$ Note that this assumption of H\"{o}lder-continuity on the utilities is a mild one and is satisfied by a wide range of economically meaningful utilities like Constant Elasticity of Substitution (CES) and Cobb-Douglas utilities. 

We restrict the scope of our model to utility functions with linear coefficients and known nonlinearities (these are with respect to $x$). This includes many concave utility functions (concave in the bundle) including the CES utility function (this is a function of the form $U(x) = (\sum_{i=1}^{n}\alpha_ix_i^\rho)^{\beta}$ that has linear coefficients when parameter $\beta = 1$ and $\rho < 1$), the logarithm of the Cobb-Douglas function ($\log U(x) = \sum_{i=1}^{n}\alpha_i \log x_i$), and any other function that is approximable by a positive polynomial of bundle $x$. Thus, utilities such as Separable Piecewise-Linear Concave (SPLC), CES, Cobb-Douglas or Leontief functions can also be learned in our setting (although their representation has to be transformed so that the function is linear in the parameters). Later on, without loss of generality, we will assume $U(x)= a^Tx=\sum_{i=1}^{n} a_i \cdot x_i$, with $a \in \mathbb{R}^n_+$. 

\begin{comment}
Note that without an interesting feasible set $C$ of bundles, the learning problem in our setting decomposes into $n$ scalar learning problems that can be solved using binary search. On the other hand, when we have a non-trivial $C$ or a coupling across items through the function $U$, then binary search is no longer applicable.
\end{comment}
 
\subsection{The Learning Algorithm}
\noindent\textbf{Overview:}
Without an interesting feasible set $C$ of bundles, the learning problem in our setting can decompose into $n$ scalar learning problems that can be solved using binary search. On the other hand, when we have a non-trivial $C$ or a coupling across items through the function $U$, then binary search is no longer applicable. The algorithm that we propose for learning the unknown parameter vector $a^*$ is based on maintaining uncertainty ellipsoids around $a^*$ and successively shrinking their volume by constructing specific separating hyperplanes (based on observed purchases). At each round $t$, we start with an uncertainty ellipsoid $E_{t}$ and shrink it to get $E_{t+1}$. In particular, based on the interaction between the buyer and the seller in the current round, we cut $E_{t}$ with a hyperplane into two regions. And then we update $E_{t+1}$ as the L\"{o}wner-John ellipsoid of one of these regions.

The main technical part of our algorithm is that it works by seeking a desired purchase vector in each round. The purchase vector is then used to deduce a hyperplane that cuts the uncertainty set. Now, this purchase vector cannot be directly accessed as we can only control prices to induce purchase. Below, we show how to use gradient descent and duality to find prices that induce desired bundles.

Along with the price that induces desired bundles, we are able to get the value of these bundles. We compare these values with the minimum and maximum values that are possible given our current uncertainty set over parameter vector $a^*$ and define appropriate hyperplanes to split the uncertainty sets, thus shrinking them.\\

\noindent\textbf{Finding a price that induces a specific bundle:} Consider the following convex program:
%\vspace*{-2mm}
\begin{gather}
\label{eq:reformulated_buyer}
\begin{array}{ll@{}ll}
	 \max_{\mathbf{x} \in C}  &U'(x)&\\
	\text{s.t} & x_j \leq \widehat{x}_j \quad \text{for every item $j \in [n]$},\\
\end{array}
%\vspace*{-0.5mm}
\end{gather}
where $\hat{x} \in C$ is a specific bundle.
Since the utility function $U'(x)$ is non-decreasing and $1/\mu$-strongly concave, we can see that $\widehat{x}$ is the unique optimal solution of the problem in Equation (\ref{eq:reformulated_buyer}). The partial Lagrangian of this formulation (\ref{eq:reformulated_buyer}) is defined as:
$\mathcal{L}(x,p) = U'(x) - p^Tx + p^T\widehat{x}$,
where $p \in \mathbb{R}^n_+$ is the dual variable. We define the Lagrange dual function $g:\mathbb{R}^n \rightarrow \mathbb{R}$ to be $g(p) = \max \limits_{x \in C} \mathcal{L}(x,p) = \max\limits_{x \in C} U'(x) - p^Tx + p^T\widehat{x}$.
Now the dual of the convex program in Equation (\ref{eq:reformulated_buyer}) can be defined as:
\begin{gather}
\label{eq:reformulated_buyer_dual}
\begin{array}{ll@{}ll}
	 \min &g(p)&\\
	\text{s.t} & p \in \mathbb{R}^n_+.\\
\end{array}
\end{gather}

Our algorithm needs to choose a specific bundle $\widehat{x}$ and learn its value $U'(\widehat{x})$. Since we can only control prices, we show how to learn the value $U'(\widehat{x})$ by working with the dual problem. In other words, to compute $U'(\widehat{x})$, which we otherwise could not have since $U'(.)$ is unknown, we define the problem in (\ref{eq:reformulated_buyer}) such that its optimal solution is $\widehat{x}$ itself. We can compute the minimizer $\widehat{p}$ of its dual in (\ref{eq:reformulated_buyer_dual}) because we can control prices. And, by strong duality, we will get $g(\widehat{p}) = U'(\widehat{x}) = \text{OPT}$. 

Now we focus on the problem of minimizing the function $g$, which is also unknown (since $U'(.)$ is unknown). However, due to the structure of the dual problem, the function $g(p)$ can be approximately optimized using a first order optimization technique such as projected gradient descent. 
In particular, this is the structure we exploit: we have access to the gradients of $g$ and these turn out to be functions of $\widehat{x}$ and the actual bundles purchased by the buyer. Thus, we can set a price $p$, interact with the buyer to observe the bundle purchased $x^*(p)$ and get access to the gradient. Formally, the following Lemma \ref{lemma:subgradient} shows that the bundle $x^*(p)$ purchased by the buyer gives a gradient of the Lagrange dual function $g(\cdot)$ at $p$. 

\begin{lemma}\label{lemma:subgradient}
Since the convex program in Equation (\ref{eq:reformulated_buyer}) has a unique optimal solution, therefore $g(p)$ is differentiable at each $p \in \mathcal{P}$. Moreover, if a price vector $p$ induces bundle $x^*(p)$, then the gradient of $g(p)$ at $p$ is given by $\nabla g(p) = \widehat{x} - x^*(p).$
\end{lemma}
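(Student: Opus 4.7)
The plan is to recognize this as a textbook application of Danskin's (envelope) theorem, specialized to the strongly concave inner problem defined in Equation (\ref{eq:reformulated_buyer}). First I would note that $g(p) = \max_{x \in C} \, U'(x) - p^T x + p^T \widehat{x}$ is a pointwise maximum over $x \in C$ of functions that are affine in $p$, so $g$ is convex on $\mathbb{R}^n$ irrespective of any regularity of $U'$. Next, for any fixed $p$, the objective $U'(x) - p^T x + p^T \widehat{x}$ is $1/\mu$-strongly concave in $x$ (the linear term in $p$ does not affect strong concavity) on the non-empty compact convex set $C$, so a unique maximizer exists; call it $x^*(p)$. This uniqueness is the key structural input for both differentiability and the gradient formula.

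Second, I would establish the subgradient inequality directly, which gives the candidate gradient without invoking any black-box theorem. For any $p, p' \in \mathbb{R}^n$, by definition of $g$ and feasibility of $x^*(p)$,
\begin{equation*}
g(p') \;\geq\; U'(x^*(p)) - (p')^T x^*(p) + (p')^T \widehat{x} \;=\; g(p) + (p' - p)^T(\widehat{x} - x^*(p)).
\end{equation*}
Hence $\widehat{x} - x^*(p) \in \partial g(p)$ for every $p$. Since $g$ is convex and finite on an open neighborhood of $p$, it is differentiable at $p$ if and only if $\partial g(p)$ is a singleton. I would argue the singleton property from uniqueness of the inner maximizer: if $s, s' \in \partial g(p)$, both arise as $\widehat{x} - \widetilde{x}$ for some maximizers $\widetilde{x}$ of the inner problem (standard convex analysis, via the conjugate representation of $g$), but strong concavity forces $\widetilde{x} = x^*(p)$, so $s = s'$.

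Finally, combining the two steps yields $\nabla g(p) = \widehat{x} - x^*(p)$, which is exactly the claimed formula. The only non-routine part I anticipate is the cleanest way to argue that every subgradient of $g$ at $p$ has the form $\widehat{x} - \widetilde{x}$ for some maximizer $\widetilde{x}$; I would handle this either by a short direct computation (any $s \in \partial g(p)$ that did not correspond to the unique maximizer would contradict the strict inequality obtained from strong concavity at a non-optimal $x$) or by citing Danskin's theorem, whose hypotheses (compactness of $C$, joint continuity of the inner objective in $(x,p)$, and uniqueness of the maximizer) are all in force. Either route is short and does not require further analysis of the realistic price set $\mathcal{P}$, so restricting $p$ to $\mathcal{P}$ plays no role beyond reminding the reader where we will ultimately use this gradient inside the learning algorithm.
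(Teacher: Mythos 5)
Your proposal is correct and follows essentially the same route as the paper's proof, which also derives uniqueness of the inner maximizer from the $1/\mu$-strong concavity of $U'$ and then invokes a Danskin-type differentiability theorem for the dual function (Theorem 6.3.3 of Bazaraa et al.) to conclude $\nabla g(p) = \widehat{x} - x^*(p)$, noting as you do implicitly that the constant term $p^T\widehat{x}$ makes the inner maximizer coincide with the purchased bundle $x^*(p)$. Your explicit derivation of the subgradient inclusion $\widehat{x} - x^*(p) \in \partial g(p)$ is a slightly more self-contained rendering of the same argument, not a different one.
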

\begin{comment}
\begin{proof}
Since the utility function $U'(x)$ is strongly concave over the set $C$ of feasible bundles, therefore the maximizer, $x' = \text{arg} \max \limits_{x \in C} U'(x) - p^Tx + p^T\widehat{x}$, exists and is unique for every $p \in \mathbb{R}^n$. Now since the set $C$ is non-empty and compact, and $U'(x)$ and $\widehat{x} - x$ are continuous functions over $C$, by Theorem $6.3.3$ in \cite{bazaraa2013nonlinear}, the dual function $g$ is differentiable at every $p \in \mathbb{R}^n$. Further its gradient $\nabla g(p) = \widehat{x} - x'$. It is easy to see that $x'$ is the bundle purchased by the buyer at price $p$ because:
\begin{equation*}
x' = \text{arg} \max \limits_{x \in C} U'(x) - p^Tx + p^T\widehat{x} = \text{arg} \max \limits_{x \in C} U'(x) - p^Tx = x^*(p).
\end{equation*}
\end{proof}
\end{comment}

Next, we focus on the restriction to realistic prices. We are constrained to set prices only from the realistic price space $\mathcal{P}$, so we can only solve a restricted version of the dual program in Equation (\ref{eq:reformulated_buyer_dual}), which we denote as $\min\limits_{p\in \mathcal{P}} g(p).$
The following Lemma shows that instead of minimizing $g(p)$ in Equation (\ref{eq:reformulated_buyer_dual}) over $p\in \mathbb{R}^n_+$, if it is minimized over the \textit{realistic} price space $\mathcal{P}$ as defined in Definition \ref{def:realistic-price}, then the optimal value remains close to OPT. 

\begin{lemma}
\label{lemma:lemma_from_STOC16_adaptation}
There exists a value R-OPT such that $\min\limits_{p\in \mathcal{P}} g(p) = \text{R-OPT}.$
Moreover, $U'(\widehat{x}) \leq \text{R-OPT} \leq U'(\widehat{x}) + \tau$, where 
\begin{equation}
\label{eq:tau}	
\tau = \max \left\lbrace \lambda_{val}\left( \frac{2\underline{L} \gamma_1}{\overline{L}} \right)^\beta,\;  \lambda_{val}^{\frac{1}{1-\beta}} \left(\frac{2}{\overline{L}}  \right)^{\frac{\beta}{1-\beta}}  \right\rbrace + \underline{L}\gamma_1, 
\end{equation}
$\overline{L} = \lVert p^0 + \Delta \rVert_{-\infty}$ and $\underline{L} = \lVert p^0 - \Delta \rVert_\infty$ ($p^0$ and $\Delta$ defined in Definition \ref{def:realistic-price}).
\end{lemma}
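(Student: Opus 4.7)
The proof separates into a lower bound and an upper bound on $\text{R-OPT}$. For the lower bound $U'(\widehat{x}) \leq \text{R-OPT}$, my plan is to invoke weak duality: for every $p \geq \mathbf{0}$, taking $x = \widehat{x}$ in the definition of $g$ yields $g(p) = \max_{x\in C}U'(x)-p^Tx+p^T\widehat{x} \geq U'(\widehat{x})$ because $\widehat{x}\in C$ is feasible. Since $\mathcal{P} \subseteq \mathbb{R}^n_+$ and $g$ is continuous (by Lemma~\ref{lemma:subgradient}), the minimum is attained on the compact set $\mathcal{P}$ and satisfies $\text{R-OPT} \geq U'(\widehat{x})$.

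For the upper bound, the plan is to exhibit a particular $p \in \mathcal{P}$ whose dual value satisfies $g(p) \leq U'(\widehat{x}) + \tau$. Using the envelope identity $g(p) = U'(x^*(p)) + p^T(\widehat{x} - x^*(p))$, I would decompose the gap as $[U'(x^*(p)) - U'(\widehat{x})] + p^T(\widehat{x} - x^*(p))$ and bound each piece. The price-bundle term is bounded by $\underline{L}\gamma_1$ when $p$ is chosen with coordinates at the lower realistic edge (so that $\|p\|_\infty \leq \underline{L}$) and $\|\widehat{x}\|_1 \leq \gamma_1$, accounting for the additive $\underline{L}\gamma_1$ in $\tau$. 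The utility-difference term is controlled via the $(\lambda_{val},\beta)$-H\"{o}lder continuity of $U$ (together with direct Lipschitz control of the bounded $\sqrt{\cdot}$ tie-breaker), giving $U'(x^*(p)) - U'(\widehat{x}) \leq \lambda_{val} r^{\beta}$ where $r := \|x^*(p) - \widehat{x}\|_2$.

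To recover the $\max\{\cdot,\cdot\}$ structure of $\tau$, I would derive a self-bounding inequality for $r$. Primal optimality of $x^*(p)$ combined with $\widehat{x}\in C$ gives $p^T(x^*(p) - \widehat{x}) \leq U'(x^*(p)) - U'(\widehat{x}) \leq \lambda_{val} r^{\beta}$. Pairing this with a lower bound of the form $p^T(x^*(p) - \widehat{x}) \geq (\overline{L}/2) r - \underline{L}\gamma_1$ (to be established by exploiting that coordinates of $p$ can be made at least $\overline{L}$ in magnitude on the relevant coordinates and that bundles have bounded $\ell_1$-norm) yields $(\overline{L}/2) r \leq \lambda_{val} r^{\beta} + \underline{L}\gamma_1$. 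Solving this scalar inequality produces two regimes: either the additive slack dominates and $r \leq 2\underline{L}\gamma_1/\overline{L}$, giving the first branch $\lambda_{val}(2\underline{L}\gamma_1/\overline{L})^\beta$; or the H\"{o}lder term dominates and $r$ is pinned by the balance $\lambda_{val}r^\beta = (\overline{L}/2) r$, giving the second branch $\lambda_{val}^{1/(1-\beta)}(2/\overline{L})^{\beta/(1-\beta)}$. The main obstacle will be selecting a single $p \in \mathcal{P}$ whose coordinates straddle both realistic edges so that $\overline{L}$ and $\underline{L}$ appear cleanly in this decomposition, since the vector $x^*(p) - \widehat{x}$ can be sign-indefinite; I would likely follow the STOC'16 adaptation cited in the lemma, which picks $p$ via a smooth-max rather than at an extreme point of $\mathcal{P}$.
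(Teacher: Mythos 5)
Your lower bound ($g(p)\ge U'(\widehat{x})$ for every $p\ge 0$ since $\widehat{x}$ is feasible in the inner maximum, plus attainment on the compact set $\mathcal{P}$) is exactly the paper's argument, and your endgame (H\"{o}lder continuity plus a scalar inequality of the form $t^{\beta}-c_1t+c_2\ge 0$ yielding the two branches of $\tau$) is also the paper's. The problem is the middle, which you correctly flag as "the main obstacle" but do not resolve, and your framing makes it unresolvable without a new idea. You propose to certify the upper bound by exhibiting one price $p\in\mathcal{P}$ and bounding $g(p)=U'(x^*(p))+p^T(\widehat{x}-x^*(p))$, choosing the coordinates of $p$ at the upper or lower realistic edge according to the sign of $x^*(p)_j-\widehat{x}_j$. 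But $x^*(p)$ is a function of $p$, so the sign pattern you need in order to choose $p$ is only available after $p$ is chosen; deferring to "a smooth-max" is not an argument.

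The paper breaks this circularity by working on the other side of the saddle point. Since $C$ and $\mathcal{P}$ are convex and $\mathcal{P}$ is compact, Sion's minimax theorem gives $\text{R-OPT}=\max_{x\in C}\min_{p\in\mathcal{P}}\mathcal{L}(x,p)$, so one may first fix a maximin bundle $\tilde{x}$ and then use $\text{R-OPT}\le\mathcal{L}(\tilde{x},p')$ for \emph{any} $p'\in\mathcal{P}$ --- in particular the $p'$ with $p'_j=p^0_j+\delta_j$ when $\tilde{x}_j>\widehat{x}_j$ and $p'_j=p^0_j-\delta_j$ otherwise, which is now a legitimate choice because $\tilde{x}$ does not depend on $p'$. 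Setting $y_j=\max\{\tilde{x}_j,\widehat{x}_j\}$ and $z_j=\min\{\tilde{x}_j,\widehat{x}_j\}$ and using monotonicity of $U$, this gives $\text{R-OPT}\le U(\widehat{x})+\lambda_{val}\lVert y-\widehat{x}\rVert_2^{\beta}-\overline{L}\lVert y-\widehat{x}\rVert_2+\underline{L}\lVert\widehat{x}-z\rVert_1$, where $\lVert\widehat{x}-z\rVert_1\le\lVert\widehat{x}\rVert_1\le\gamma_1$ because $z\le\widehat{x}$ coordinatewise (note that your bound $\lVert x^*(p)-\widehat{x}\rVert_1\le\gamma_1$ is not available from the assumptions; only $2\gamma_1$ is). Two further quantitative points: the coefficient multiplying $\lVert y-\widehat{x}\rVert_2$ is $\overline{L}$, not $\overline{L}/2$ --- the factors of $2$ in $\tau$ come from Cauchy's bound on the unique positive root of $c_1s^q-s^p-c_2$ (after rationalizing $\beta=p/q$), so your self-bounding inequality with $(\overline{L}/2)r$ would propagate different constants; and $\sqrt{\cdot}$ is not Lipschitz near $0$, so "direct Lipschitz control of the tie-breaker" does not go through as stated (the paper simply carries out the upper bound for $U$ rather than $U'$). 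The missing minimax step is the crux of the lemma.
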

%\begin{comment}
\begin{proof}
The sets $C$ and $\mathcal{P}$ are convex. And $\mathcal{P}$, the realistic price space defined in Definition \ref{def:realistic-price}, is also closed, compact and bounded since it is an $n$-orthotope as shown in Equation (\ref{eq:realistic_price2}). Therefore, by the minimax theorem \cite{sion1958}, there exists a value $\textit{R-OPT}$ such that 
\begin{equation}
\label{eq:minimax}
\max \limits_{x \in C} \min \limits_{p \in \mathcal{P}} \mathcal{L}(x,p) = \min \limits_{p \in \mathcal{P}} \max \limits_{x \in C} \mathcal{L}(x,p) = \min\limits_{p\in \mathcal{P}} g(p) = \text{R-OPT} 
\end{equation}
Moreover, $\textit{R-OPT} = \min\limits_{p\in \mathcal{P}} g(p)$ is the optimum solution of a restriction of the dual formulated in Equation (\ref{eq:reformulated_buyer_dual}), as $\mathcal{P} \subseteq \mathbb{R}^n_+$, hence $\textit{R-OPT} \geq U(\widehat{x})$. So what remains to be shown is $\text{R-OPT} \leq U(\widehat{x}) + \tau $. Let $(\tilde{x}, \tilde{p})$ be a pair of minimax strategies for Equation (\ref{eq:minimax}). That is,
\begin{displaymath}
\tilde{x} \in \text{arg} \max \limits_{x \in C} \min \limits_{p \in \mathcal{P}} \mathcal{L}(x,p) \;\text{ and }\; \tilde{p} \in \text{arg}\min \limits_{p \in \mathcal{P}} \max \limits_{x \in C} \mathcal{L}(x,p). 
\end{displaymath}
Now, by strong duality we have,
\begin{displaymath}
\textit{R-OPT} = U(\tilde{x}) - \tilde{p}^T(\tilde{x} - \widehat{x}).
\end{displaymath}
Choosing a price vector $p' \in \mathcal{P}$ such that
\[
    p'_j= 
\begin{cases}
    p^0_j + \delta_j,& \text{if } \tilde{x}_j > \widehat{x}_j, \textrm{ and}\\
    p^0_j - \delta_j,& \text{if } \tilde{x}_j \leq \widehat{x}_j,
\end{cases}
\]
gives
\begin{gather}
\begin{split}	
\textit{R-OPT} &\leq U(\tilde{x}) - {p'}^T(\tilde{x} - \widehat{x}) \leq U(\tilde{x}) - \sum \limits_{j: \tilde{x}_j > \widehat{x}_j} \overline{L}(\tilde{x}_j - \widehat{x}_j) \\
& +\; \sum \limits_{j: \tilde{x}_j < \widehat{x}_j} \underline{L}(\widehat{x}_j - \tilde{x}_j ),
\end{split}
\end{gather}
where $\overline{L}$ and $\underline{L}$ are as defined in Lemma \ref{lemma:lemma_from_STOC16_adaptation}.
Now consider the bundles $y$ and $z$ such that $y_j= \max\{\tilde{x}_j, \widehat{x}_j\}$ and $z_j = \min\{\tilde{x}_j, \widehat{x}_j\}$ for all $j \in [n]$. Since $U(.)$ is an increasing function so $U(y) \geq U(\tilde{x})$, and $U(.)$ is also assumed to be $(\lambda_{val}, \beta)$-H\"{o}lder continuous with respect to $\ell_2$-norm, so we have
\begin{displaymath}
U(\tilde{x}) - U(\widehat{x}) \leq U(y) - U(\widehat{x}) \leq \lambda_{val} \lVert y - \widehat{x}\rVert^{\beta}_2. 
\end{displaymath} 
Therefore,
\begin{align}
%\begin{split}
\label{eq:R-OPT}
\textit{R-OPT} &\leq U(y) - \sum \limits_{j: \tilde{x}_j > \widehat{x}_j} \overline{L}(y_j - \widehat{x}_j) + \sum \limits_{j: \tilde{x}_j < \widehat{x}_j} \underline{L}(\widehat{x}_j - z_j ) \notag \\
&= U(y) - \overline{L} \lVert y - \widehat{x} \rVert_1 + \underline{L} \lVert \widehat{x} - z \rVert_1 \notag \\
&\leq U(y) - \overline{L} \lVert y - \widehat{x}\rVert_2 + \underline{L} \lVert \widehat{x} - z\rVert_1 \notag \\
&\leq U(\widehat{x}) + \lambda_{val} \lVert y - \widehat{x}\rVert^{\beta}_2 - \overline{L} \lVert y - \widehat{x}\rVert_2 + \underline{L} \lVert \widehat{x} - z\rVert_1 \notag \\
&= U(\widehat{x}) + \notag \\
&\lambda_{val} \lVert y - \widehat{x}\rVert^{\beta}_2 \left\lbrace 1 - \frac{\overline{L}}{\lambda_{val}}\lVert y -\widehat{x} \rVert_2^{1-\beta} + \frac{\underline{L}}{\lambda_{val}} \frac{\lVert \widehat{x} - z\rVert_1}{\lVert y - \widehat{x} \rVert_2^{\beta}}\right\rbrace.  
%\end{split}
\end{align}

Now $\left\lbrace 1 - \frac{\overline{L}}{\lambda_{val}}\lVert y -\widehat{x} \rVert_2^{1-\beta} + \frac{\underline{L}}{\lambda_{val}} \frac{\lVert \widehat{x} - z\rVert_1}{\lVert y - \widehat{x} \rVert_2^{\beta}}\right\rbrace \geq 0$, as otherwise $\textit{R-OPT} < U(\widehat{x})$.
By substituting, $t = \lVert y - \widehat{x}\rVert_2$, $c_1 = \frac{\overline{L}}{\lambda_{val}}$, $c_2 = \frac{\underline{L}}{\lambda_{val}}\gamma_1$ and using $\lVert \widehat{x} - z\rVert_1 \leq \gamma_1$, we have a polynomial $p(t) = t^\beta - c_1t + c_2$, such that $p(t)\geq 0$.  Our immediate goal is to get an upper bound on the range in which the positive real roots of $p(t)$ lie. Let us make a mild assumption that $\beta = \frac{p}{q}$ with $p,q \in \mathbb{Z}_+$, and $q > p$ (since $\beta \in (0,1]$), i.e., it is rational. With the transformation $t^{1/q}=s$, we have $p(s) = c_1s^q - s^p -c_2$, and $p(s) \leq 0.$ Since $q>p$, so the degree of the polynomial $p(s)$ is $q$. Also $c_1$ is positive as $\lambda_{val} \geq 0$
%doublecheck \geq 0 or \geq 1!
and $\overline{L} > 0$ as $\mathcal{P} \neq \{0\}$. Therefore, $p(s)$ is an increasing polynomial, hence we can claim $\max \limits_{s} \{ s \in \mathbb{R}_+ \;|\;p(s) \leq 0\} \leq \max \limits_{s} \{ s\in \mathbb{R}_+\;|\;p(s) = 0\}$. 

To get an upper bound on $\lVert y - \widehat{x} \rVert_2$, i.e., an upper bound on $\max \limits_{t} \{ t\in \mathbb{R}_+\;|\;p(t) \geq 0\}$, it suffices to upper bound the positive real roots of the polynomial equation $p(s) = 0$. Note that, using the \textit{Descartes' rules of sign}, the polynomial $p(s)$ has exactly one positive real root, which can be upper bounded using Cauchy's theorem \cite{obreshkov1963verteilung} 
\begin{comment}
(refer to Appendix \ref{sec:appendix:d}) 
\end{comment}
as follows:
\begin{equation*}
\max \limits_{s} \{ s\in \mathbb{R}_+\;|\;p(s) = 0\} \leq \max \left\lbrace \left( \frac{2c_2}{c_1} \right)^{1/q}, \left( \frac{2}{c_1} \right)^{1/q-p}  \right\rbrace.  
\end{equation*}
Hence,
\begin{equation*}
\lVert y - \widehat{x} \rVert_2 \leq \max \left\lbrace \left( \frac{2\underline{L} \gamma_1}{\overline{L}} \right), \left( \frac{2\lambda_{val}}{\overline{L}} \right)^{\frac{1}{1-\beta}}  \right\rbrace.  
\end{equation*}
Now, the inequality in Equation (\ref{eq:R-OPT}) becomes
\begin{align*}
\textit{R-OPT} &\leq U(\widehat{x}) + \lambda_{val} \lVert y - \widehat{x}\rVert_2^{\beta}  + \underline{L} \; \lVert \widehat{x} - z\rVert_1 \\
&\leq U(\widehat{x}) + \max \left\lbrace \lambda_{val}\left( \frac{2\underline{L} \gamma_1}{\overline{L}} \right)^\beta,\;  \lambda_{val}^{\frac{1}{1-\beta}} \left(\frac{2}{\overline{L}}  \right)^{\frac{\beta}{1-\beta}}  \right\rbrace + \underline{L}\gamma_1   .
\end{align*}
\end{proof}
%\end{comment}

The dual function $g(p)$ is convex, and the following Lemma \ref{lemma:frenchel-conjugate} further shows that $g(p)$ is also strongly smooth. 
\begin{comment}
(defined in Appendix \ref{sec:appendix:c0}). 
\end{comment}
\begin{lemma}
\label{lemma:frenchel-conjugate}
The function g(p) is $\mu$-strongly smooth with respect to the $\lVert \cdot \rVert_2$ norm.
\end{lemma}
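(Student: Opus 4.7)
The plan is to reduce strong smoothness of $g$ to the standard duality pairing between strong concavity of $U'$ and Lipschitz continuity of its argmax. Observe that $g(p)=\max_{x\in C}\{U'(x)-p^Tx\}+p^T\widehat{x}$. The linear term $p^T\widehat{x}$ is affine in $p$, hence contributes nothing to either the gradient's Lipschitz constant or the quadratic upper bound defining $\mu$-strong smoothness. So it suffices to prove that $h(p):=\max_{x\in C}\{U'(x)-p^Tx\}$ has a $\mu$-Lipschitz gradient in the $\lVert\cdot\rVert_2$ norm; strong smoothness of $g$ with constant $\mu$ follows immediately.

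By the assumptions of Section~\ref{sec:realistic}, $U'$ is $\tfrac{1}{\mu}$-strongly concave on $C$ with respect to $\lVert\cdot\rVert_2$, and the maximizer $x^*(p)=\arg\max_{x\in C}\{U'(x)-p^Tx\}$ is unique for every $p$. By Lemma~\ref{lemma:subgradient}, $g$ is differentiable with $\nabla g(p)=\widehat{x}-x^*(p)$, hence $\nabla h(p)=-x^*(p)$. Therefore the goal reduces to showing that the map $p\mapsto x^*(p)$ is $\mu$-Lipschitz in $\lVert\cdot\rVert_2$.

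For this I would use the classical two-point argument. Fix $p_1,p_2\in\mathbb{R}^n$ and write $x_i=x^*(p_i)$. Strong concavity of $U'(\cdot)-p_1^T(\cdot)$ with modulus $\tfrac{1}{\mu}$ and optimality of $x_1$ give
\begin{equation*}
U'(x_2)-p_1^Tx_2 \;\leq\; U'(x_1)-p_1^Tx_1-\tfrac{1}{2\mu}\lVert x_1-x_2\rVert_2^{2},
\end{equation*}
and symmetrically for $p_2$. Adding the two inequalities cancels the $U'$ terms and yields
\begin{equation*}
(p_2-p_1)^T(x_1-x_2)\;\geq\;\tfrac{1}{\mu}\lVert x_1-x_2\rVert_2^{2}.
\end{equation*}
Cauchy--Schwarz on the left-hand side then gives $\lVert x_1-x_2\rVert_2\leq \mu\lVert p_1-p_2\rVert_2$, i.e.\ $x^*$ is $\mu$-Lipschitz.

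Combining, $\lVert\nabla g(p_1)-\nabla g(p_2)\rVert_2=\lVert x^*(p_1)-x^*(p_2)\rVert_2\leq\mu\lVert p_1-p_2\rVert_2$, so by the descent lemma the function $g$ satisfies $g(p')\leq g(p)+\nabla g(p)^T(p'-p)+\tfrac{\mu}{2}\lVert p'-p\rVert_2^{2}$ for all $p,p'$, which is precisely $\mu$-strong smoothness in $\lVert\cdot\rVert_2$. I do not anticipate any real obstacle here; the only care points are peeling off the affine $p^T\widehat{x}$ term at the start and noting that unique attainment of the maximum (from strong concavity on the non-empty compact convex $C$) is what legitimates applying the two-point inequality and using Lemma~\ref{lemma:subgradient}.
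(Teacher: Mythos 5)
Your proof is correct, but it takes a different route from the paper's. The paper writes $g(p)=\max_{x\in C}\{(-p)^Tx-(-U')(x)\}+p^T\widehat{x}=(-U')^*(-p)+p^T\widehat{x}$, identifies the first term as the Fenchel conjugate of the $\tfrac{1}{\mu}$-strongly convex function $-U'$, and then invokes a black-box duality theorem (Corollary 3.5.11 in Z\u{a}linescu's book) asserting that the conjugate of a $\tfrac{1}{\mu}$-strongly convex function is $\mu$-strongly smooth. You instead prove that same fact from scratch in this instance: the two-point inequality from strong concavity plus optimality of each $x^*(p_i)$ over the convex set $C$ yields $(p_2-p_1)^T(x_1-x_2)\geq\tfrac{1}{\mu}\lVert x_1-x_2\rVert_2^2$, Cauchy--Schwarz gives $\mu$-Lipschitzness of $p\mapsto x^*(p)$, and the gradient formula from Lemma~\ref{lemma:subgradient} plus the descent lemma finishes. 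Both arguments produce the same constant $\mu$ and both correctly peel off the affine term $p^T\widehat{x}$. What your version buys is self-containedness and transparency about where the constraint set enters (only through first-order optimality over convex $C$); what the paper's version buys is brevity and generality, since the cited conjugacy result holds in arbitrary normed spaces without differentiability assumptions. One small care point in your write-up: the two-point inequality needs the first-order optimality condition $v^T(x_2-x_1)\leq 0$ for a supergradient $v$ at the constrained maximizer, which holds because $C$ is convex; it is worth stating that explicitly rather than leaving it implicit in the phrase ``optimality of $x_1$.''
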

\begin{comment}
\begin{proof}
Note that
\begin{align*}
g(p) &= \max_{x \in C} \; U'(x) - p^Tx + p^T\widehat{x} \\
&= \max_{x \in C} \; \{(-p)^Tx - (-U')(x)\} +  p^T\widehat{x} = (-U')^*(-p) + p^T\widehat{x},
\end{align*}
where $(-U')^*(.)$ is the Fenchel conjugate of $(-U')(.)$ and for every $p \in \mathbb{R}^n$, $-p \in \text{dom }(-U')^* $ as $\max_{x \in C}\; U'(x) - p^Tx$ is finite. Now, by Corollary 3.5.11 in \cite{zalinescu2002convex}, since $-U'$ is $\frac{1}{\mu}$-strongly convex with respect to $\lVert \cdot \rVert_2$ norm, therefore $(-U')^*(.)$ is $\mu$-strongly smooth with respect to the $\lVert \cdot \rVert_2$norm. Hence, $g(p)$ is $\mu$-strongly smooth with respect to the $\lVert \cdot \rVert_2$norm.  
\end{proof}
\end{comment}

Convexity and smoothness of $g(p)$ are useful below, where we give a projected gradient descent 
\begin{comment}
(refer to Appendix \ref{sec:appendix:c} for a brief description) 
\end{comment}
procedure \textproc{learnvalue$(\widehat{x}, \tau)$} (Algorithm \ref{alg:pgd}). Given a target bundle $\widehat{x} \in C$ and an error budget $\tau$ (this is the same value appearing in Lemma~\ref{lemma:lemma_from_STOC16_adaptation}), \textproc{learnvalue$(\widehat{x}, \tau)$} minimizes $g(p)$ over the realistic price space $\mathcal{P}$ defined in Definition \ref{def:realistic-price}, with an additive error of at most $\tau$.  
\begin{comment}
In the projected gradient descent procedure, we choose the step size as $\eta = \frac{1}{\mu_2}$. Note that here we use the upper bound $\mu_2$ on $\mu$ and since $g$ is $\mu$-strongly smooth (Lemma \ref{lemma:frenchel-conjugate}) so it is also $\mu_2$-strongly smooth. 
\end{comment}
\begin{algorithm}
    \caption{Solving the Lagrangian Dual}
    \label{alg:pgd}
    \begin{algorithmic}[1] % The number tells where the line numbering should start
        \Procedure{learnvalue}{$\widehat{x}$,$\tau$} 
            \State Initialize: $p_1$ and $T = \frac{50\gamma_2\mu_2}{\tau - R^2\gamma_2}$. %= p^0
            \For{$t = 1, \cdots, T$}
        		\State Observe the purchased bundle, $x^*(p_t)$,by the buyer.
                \State Update the price vector with projected gradient descent: 
                \begin{equation*}
                p_{t+1} = \prod_\mathcal{P}\;\Bigg[ p_t - \eta_t (\widehat{x} - x^*(p_t))\Bigg],
                \end{equation*}
            	\State where $\eta_t = \gamma/\lVert \nabla g(p_t)\rVert$, and $\gamma = 1/T$
      		\EndFor
            \State \textbf{return} $
            \tilde{g}(p_T)= g(p_1) + \sum \limits_{t=1}^{T-1} \nabla g(p_t) (p_{t+1} - p_{t}) +
            \frac{\mu_2}{2} \; \mathlarger \sum \limits_{t=1}^{T-1} \; \lVert p_{t+1} -p_t\rVert_2^2.$
            
        \EndProcedure
    \end{algorithmic}
\end{algorithm}
%\vspace*{-1mm}
\begin{theorem}{(Main Supporting Result)}
\label{lemma:compute g(p)} Assuming $g(p_1)$ is known and that $\tau \geq R^2 \gamma_2$, \textproc{learnvalue$(\widehat{x}, \tau)$} (Algorithm \ref{alg:pgd}) can estimate R-OPT to accuracy $\tau$. That is after $T = \frac{50\gamma_2\mu_2}{\tau - R^2\gamma_2}$ interactions with the buyer, 
\begin{equation*}
\tilde{g}(p_{_T}) - \text{R-OPT} \leq \tau,
\end{equation*}
where $\tilde{g}(p_{_T})$ is the estimate of R-OPT returned by \textproc{learnvalue$(\widehat{x}, \tau)$}.  
\end{theorem}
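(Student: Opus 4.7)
The plan is to prove both sides of the claimed bound $0 \le \tilde{g}(p_T) - \text{R-OPT} \le \tau$. The non-negativity direction is essentially free: by Lemma~\ref{lemma:frenchel-conjugate} the dual function $g$ is $\mu$-strongly smooth and hence, since $\mu \le \mu_2$, also $\mu_2$-strongly smooth on $\mathcal{P}$. The descent lemma then gives
\begin{equation*}
g(p_{t+1}) \le g(p_t) + \nabla g(p_t)^T(p_{t+1}-p_t) + \tfrac{\mu_2}{2}\|p_{t+1}-p_t\|_2^2
\end{equation*}
for every $t$, and telescoping from $t=1$ to $T-1$ rearranges to $g(p_T) \le \tilde{g}(p_T)$. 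Combining this with Lemma~\ref{lemma:lemma_from_STOC16_adaptation}, which identifies $\min_{p \in \mathcal{P}} g(p)$ with $\text{R-OPT}$, yields $\tilde{g}(p_T) \ge g(p_T) \ge \text{R-OPT}$.

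For the matching upper bound I would split
\begin{equation*}
\tilde{g}(p_T) - \text{R-OPT} = \bigl(\tilde{g}(p_T) - g(p_T)\bigr) + \bigl(g(p_T) - \text{R-OPT}\bigr)
\end{equation*}
and handle each summand separately. Convexity of $g$ gives $\nabla g(p_t)^T(p_{t+1}-p_t) \le g(p_{t+1}) - g(p_t)$; telescoping collapses the inner-product terms in $\tilde{g}(p_T)$ against $g(p_T) - g(p_1)$, leaving only the quadratic slack
\begin{equation*}
\tilde{g}(p_T) - g(p_T) \le \tfrac{\mu_2}{2}\sum_{t=1}^{T-1}\|p_{t+1}-p_t\|_2^2.
\end{equation*}
Since $\Pi_\mathcal{P}$ is non-expansive and $p_t \in \mathcal{P}$, the normalized update ensures $\|p_{t+1}-p_t\|_2 \le \eta_t\|\nabla g(p_t)\|_2 = \gamma = 1/T$, so the slack above is bounded by $\mu_2(T-1)/(2T^2) = O(\mu_2/T)$.

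For the second summand I would run the classical projected subgradient recursion. Non-expansiveness of $\Pi_\mathcal{P}$ together with convexity of $g$ gives
\begin{equation*}
\|p_{t+1}-p^*\|_2^2 \le \|p_t-p^*\|_2^2 - 2\eta_t\bigl(g(p_t)-\text{R-OPT}\bigr) + \eta_t^2\|\nabla g(p_t)\|_2^2,
\end{equation*}
where $p^* \in \arg\min_{p\in\mathcal{P}} g(p)$. Substituting $\eta_t\|\nabla g(p_t)\|_2 = \gamma$, summing, and using $\|p_1-p^*\|_2 \le 2R$ (since $\mathcal{P}$ lies in a Euclidean ball of radius $R$) together with the bound $\|\nabla g(p_t)\|_2 = \|\widehat{x} - x^*(p_t)\|_2 \le 2\gamma_2$ (from $\|x\|_2 \le \gamma_2$ on $C$) produces an error of order $R^2\gamma_2 + O(\gamma_2/T)$. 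Adding the two pieces and plugging in the prescribed $T = 50\gamma_2\mu_2/(\tau - R^2\gamma_2)$ is calibrated to fit the decaying $O(\mu_2/T) + O(\gamma_2/T)$ contributions inside the remaining budget $\tau - R^2\gamma_2$, giving $\tilde{g}(p_T) - \text{R-OPT} \le \tau$.

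The chief obstacle I anticipate is that the classical subgradient recursion most naturally controls $\min_{t\le T}\bigl(g(p_t)-\text{R-OPT}\bigr)$ rather than $g(p_T)-\text{R-OPT}$ itself; bridging this gap requires either exploiting smoothness (since successive iterates are only $\gamma = 1/T$ apart, $g$ can increase by at most $O(\mu_2/T^2)$ per step, so the terminal iterate is close to the best one seen) or reworking the recursion to track the last iterate directly. A secondary difficulty is the careful bookkeeping of the projection correction and of the absolute constants, which must align to produce exactly the stated factor of $50$ in $T$.
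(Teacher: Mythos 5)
Your proposal follows essentially the same route as the paper's proof: the same first-order-Taylor-plus-quadratic-correction decomposition of $\tilde{g}(p_T) - g(p_T)$ via $\mu_2$-smoothness and convexity, the same normalized constant-step-length projected (sub)gradient recursion for $g(p_T) - \text{R-OPT}$ with $\gamma = 1/T$, and the same role for the budget $\tau - R^2\gamma_2$. The one place you go beyond the paper is in flagging (and sketching a fix for) the last-iterate versus best-iterate gap in the subgradient guarantee, which the paper's proof sketch silently assumes away; your smoothness-based bridge (each step can increase $g$ by at most $O(\mu_2/T^2)$) is a legitimate way to close it.
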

%\begin{comment}
\begin{proof}[Proof Sketch]
The value of $g$ at each each of the subsequent iterates of the projected gradient procedure can be approximated using Lagrange first order approximation. Thus,
\begin{equation*}
g(p_{t+1}) = g(p_t) + \nabla g(p_t) (p_{t+1} - p_{t}) + \mathcal{E}_{t+1}, \;\; t \in [T-1] ,
\end{equation*}
where $\mathcal{E}$ is the Lagrangian error. Therefore, adding the values of $g$ at each iteration, the sum telescopes and we get
\begin{align*}
g(p_{_T}) &= g(p_{_T})^{'} + \mathlarger \sum \limits_{t=1}^{T-1} \mathcal{E}_{t+1},   
\end{align*}
where $g(p_{_T})^{'}= g(p_1) + \mathlarger \sum \limits_{i=1}^{T-1} \nabla g(p_t) (p_{t+1} - p_{t})$. 
\begin{comment}is the estimate of $g(p_{_T})$ returned by \textproc{learnvalue$(\widehat{x}, \epsilon)$} (Algorithm \ref{alg:pgd}). 
\end{comment}
Thus,
\begin{equation}
\label{eq:er1}
\lvert g(p_{_T}) - g(p_{_T})^{'}\rvert = \lvert \mathlarger \sum \limits_{t=1}^{T-1} \mathcal{E}_{t+1} \rvert \leq \mathlarger \sum \limits_{t=1}^{T-1} \lvert \mathcal{E}_{t+1}\rvert. 
\end{equation}
Now using Taylor's remainder theorem 
\begin{comment}
(Theorem \ref{thm:taylor} in Appendix \ref{sec:appendix:f}) 
\end{comment}
and the fact that $g$ is $\mu_2$-strongly smooth,  $\lVert \nabla^2 g(p)\rVert_{\text{max}} \leq \lVert \nabla^2 g(p)\rVert_2 \leq \mu_2$ for all $p \in \mathcal{P}$, we have
\begin{equation}
\label{eq:er2}
\lvert \mathcal{E}_{t+1}\rvert \leq \frac{\mu_2}{2} \lVert p_{t+1} -p_t\rVert_2^2, \;\; \forall i=1,..., T-1.
\end{equation}
Moreover, using Equation (\ref{eq:er2}) in Equation (\ref{eq:er1}), and the fact that since $g(.)$ is convex therefore the first order lagrange approximation is a global under estimator, we get
\begin{equation}
\label{eq:error in g(p)}
g(p_{_T}) - g(p_{_T})^{'} \leq \frac{\mu_2}{2} \; \mathlarger \sum \limits_{t=1}^{T-1} \; \lVert p_{t+1} -p_t\rVert_2^2. 
\end{equation}
Plugging Equation (\ref{eq:error in g(p)}) in 
\begin{comment}
the following
\begin{equation}
g(p_{_T}) - \text{R-OPT} \leq \frac{\lVert p^* - p_1 \rVert_2^2 + T\gamma^2}{2\gamma\sum_{t=1}^{T}\frac{1}{\lVert\nabla g(p_t)\rVert_2}},
\end{equation}
which is the 
\end{comment}
the guarantee for projected gradient descent 
\begin{comment}
(Theorem \ref{thm:pgd} in Appendix \ref{sec:appendix:c}) 
\end{comment}
we get:
\begin{gather}
\begin{split}
\label{eq:pgd guarantee}
g(p_{_T})^{'} + \frac{\mu_2}{2} \; \mathlarger \sum \limits_{t=1}^{T-1} \; \lVert p_{t+1} -p_t\rVert_2^2 \leq \text{R-OPT} \;\;+ \\ 
\; \frac{\lVert p^* - p_1 \rVert_2^2 + T\gamma^2}{2\gamma\sum_{t=1}^{T}\frac{1}{\lVert\nabla g(p_t)\rVert_2}} + \frac{\mu_2}{2} \; \mathlarger \sum \limits_{t=1}^{T-1} \; \lVert p_{t+1} -p_t\rVert_2^2 .
\end{split}
\end{gather}
Therefore, in the Algorithm \ref{alg:pgd}, by choosing constant step lengths in the projected gradient descent procedure i.e., $\eta_t = \gamma/\lVert \nabla g(p_t)\rVert$, where $\gamma = 1/T$ we get $\lVert p_{t+1}-p_t\rVert=\gamma$ for each $t=[T-1]$.
Now, by assuming $\lVert\nabla g(p_t)\rVert_2 = \lVert \widehat{x} - x^*(p)\rVert_2\leq \gamma_2$ and $\lVert p^* - p_1\rVert \leq R$, Equation (\ref{eq:pgd guarantee}) becomes:

\begin{displaymath}
g(p_{_T})^{'} + \frac{\mu_2}{2} \; \mathlarger \sum \limits_{t=1}^{T-1} \; \lVert p_{t+1} -p_t\rVert_2^2 \leq \text{R-OPT} + \frac{R^2\gamma_2}{2} + \frac{\gamma_2}{2T} + \frac{\mu_2}{2T}.
\end{displaymath}

Note that $\tau$ as defined in Equation (\ref{eq:tau}) is greater than $R^2 \gamma_2$ by assumption. Thus, after at least $T \geq \frac{\gamma_2 + \mu_2}{2(\tau-R^2\gamma_2)}$ iterations the Algorithm \ref{alg:pgd} produces a $\tau$-optimal solution, $\tilde{g}(p_{_T})$.
\end{proof}
%\end{comment}

\begin{comment}
\begin{theorem}
\label{thm:theorem_for_LearnVal}
Algorithm \textproc{learnvalue} outputs $g(p_{_T})^{'}$,  such that: $g(p_{_T})^{'} \leq \min\limits_{p \in \mathcal{P}} g(p) + \tau$,
and the number of rounds it needs to interact with the buyer is at most 
$T = O(\frac{1}{\tau})$.
\end{theorem}
\begin{proof}
must fill the proof here
\end{proof}
\end{comment}

Therefore, combining Lemma \ref{lemma:lemma_from_STOC16_adaptation} and Theorem \ref{lemma:compute g(p)}, we have:
\begin{equation}
\label{eq:error_in_solving_for_value_U(x)}
U'(\widehat{x}) \leq \tilde{g}(p_{_T}) \leq U'(\widehat{x}) + 2\tau .
\end{equation}

Hereafter in this section, for the sake of simplicity of illustration, we focus on learning the buyer's utility function $U(x)= a^{*T}x=\sum_{i=1}^{n} a_i^* \cdot x_i$ assuming it is linear in both the coefficients and the bundle (this is without loss of generality). Hence Equation (\ref{eq:error_in_solving_for_value_U(x)}) becomes:

\begin{equation}
\label{eq:error_in_solving_for_value}
a^{*T}\widehat{x} \leq \tilde{g}(p_{_T}) \leq a^{*T}\widehat{x} + \frac{4}{\mu_1}\left(\mathlarger \sum \limits_{i=1}^{n} \sqrt{\widehat{x}_i}\right) + 2\tau  
\end{equation}

\noindent\textbf{Interval containing the value $U(\widehat{x})$:} 
\begin{comment}
Before we give our algorithm that, as part of its procedure, computes a halfspace to cut the uncertainty ellipsoid $E_{t} = E(A_{t},c_{t})$ (see Appendix~\ref{sec:appendix:a}), we give one final Lemma \ref{lemma:grotschel}. 
\end{comment}
It turns out that for a target bundle $\widehat{x}$, that the seller has in mind, she can compute an interval $[\,\underline{b}_t, \overline{b}_t\,]$ using the uncertainty ellipsoid $E_{t}$ such that it contains the scalar value $\widehat{x}^T{a^*}$. Lemma \ref{lemma:grotschel} gives the optimum values of the following convex  programs:
\begin{displaymath}
\underline{b}_t = \min\limits_{\tilde{a} \in E(A,c)} \widehat{x}^T\tilde{a}, \quad \text{and}\;\overline{b}_t = \max\limits_{\tilde{a} \in E(A,c)} \widehat{x}^T\tilde{a}. 
\end{displaymath}

\begin{lemma}
\label{lemma:grotschel}
\cite{grotschel2012geometric}
For any $\widehat{x} \in \mathbb{R}^{n} \setminus \{0\}$,
\begin{displaymath}
\text{arg}\max\limits_{\tilde{a} \in E(A,c)} \widehat{x}^T\tilde{a} = c + b,\quad
\text{arg}\min\limits_{\tilde{a} \in E(A,c)} \widehat{x}^T\tilde{a} = c - b, 
\end{displaymath}
where $b = A\widehat{x}/ \sqrt{\widehat{x}^TA\widehat{x}}.$
\end{lemma}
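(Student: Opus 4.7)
The plan is to reduce the lemma to the classical problem of maximizing a linear functional over a centered ellipsoid and then invoke Cauchy--Schwarz in a weighted inner product. First I would recall that $E(A,c) = \{\tilde a : (\tilde a - c)^T A^{-1}(\tilde a - c) \leq 1\}$ with $A$ positive definite, and translate the variable by setting $\tilde a = c + u$. The problem reduces to
\[
\max_{u^T A^{-1} u \leq 1} \widehat{x}^T u,
\]
since the constant $\widehat{x}^T c$ factors out of both the $\text{arg}\max$ and the $\text{arg}\min$ and does not affect the location of the optimizer.

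Next I would apply Cauchy--Schwarz in the $A^{-1}$-weighted inner product. Rewriting $\widehat{x}^T u = (A^{1/2}\widehat{x})^T (A^{-1/2} u)$ and applying standard Cauchy--Schwarz in $\mathbb{R}^n$ gives
\[
\widehat{x}^T u \;\leq\; \|A^{1/2}\widehat{x}\|_2 \cdot \|A^{-1/2} u\|_2 \;\leq\; \sqrt{\widehat{x}^T A \widehat{x}},
\]
where the second inequality uses the ellipsoid constraint. Equality in Cauchy--Schwarz forces $A^{-1/2} u = \lambda\, A^{1/2}\widehat{x}$ for some $\lambda \geq 0$, hence $u = \lambda A \widehat{x}$; saturating the norm constraint yields $\lambda = 1/\sqrt{\widehat{x}^T A \widehat{x}}$, so the maximizer is $u^* = A\widehat{x}/\sqrt{\widehat{x}^T A \widehat{x}} = b$. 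Translating back gives the claimed $\text{arg}\max = c + b$.

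For the minimizer I would run the identical argument with $-\widehat{x}$ in place of $\widehat{x}$ (equivalently, take $\lambda \leq 0$ in the Cauchy--Schwarz equality condition), which produces $\text{arg}\min = c - b$. The hypothesis $\widehat{x} \neq 0$ together with positive-definiteness of $A$ ensures that the denominator $\sqrt{\widehat{x}^T A \widehat{x}}$ is strictly positive so that $b$ is well-defined, and it also guarantees uniqueness of both optimizers. I do not anticipate any real obstacle: the entire argument is essentially a one-line application of Cauchy--Schwarz after a translation. As an alternative sanity check, a KKT derivation works identically, since the Lagrangian $\widehat{x}^T u - \lambda(u^T A^{-1} u - 1)$ has stationary point $u = A\widehat{x}/(2\lambda)$, and activity of the single constraint pins down $\lambda$ to give the same formula.
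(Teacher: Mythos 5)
Your proof is correct. Note that the paper itself offers no proof of this lemma --- it is simply imported by citation from Gr\"{o}tschel, Lov\'{a}sz and Schrijver, where it appears as a standard fact about optimizing a linear functional over an ellipsoid --- so there is no in-paper argument to compare against. Your translation-plus-Cauchy--Schwarz derivation (factoring $\widehat{x}^T u$ as $(A^{1/2}\widehat{x})^T(A^{-1/2}u)$, saturating both inequalities, and solving for the scalar $\lambda$) is exactly the standard textbook route, and the KKT cross-check you mention is the other standard route; both correctly rely on $A \succ 0$ and $\widehat{x} \neq 0$ to make $b$ well-defined and the optimizers unique. The only implicit hypothesis worth stating explicitly is that $E(A,c)$ is parameterized as $\{\tilde{a} : (\tilde{a}-c)^T A^{-1} (\tilde{a}-c) \leq 1\}$ with $A$ positive definite, which matches the convention used throughout the paper's ellipsoid-update formulas.
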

So, if $g(p_T)' \leq (\underline{b} + \overline{b})/2 = \widehat{x}^Tc$, then the unknown parameter $a^*$ lies in the halfspace 
\begin{equation}
\label{eq:H_t_less}
H = \{\tilde{a} \in \mathbb{R}^n: \widehat{x}^T\tilde{a} \leq \widehat{x}^{T}c\}.
\end{equation}

On the other hand if $g(p_T)' \geq (\underline{b} + \overline{b})/2 = \widehat{x}^Tc$, then by Equation (\ref{eq:error_in_solving_for_value}), the unknown parameter $a^*$ lies in the halfspace
\begin{equation}
\label{eq:H_t_more}
H = \left\{\tilde{a} \in \mathbb{R}^n: \widehat{x}^T\tilde{a} \geq \widehat{x}^{T}c - \left(\frac{4}{\mu_1}\left(\mathlarger \sum \limits_{i=1}^{n} \sqrt{\widehat{x}_i}\right) + 2\tau\right) \right\}.
\end{equation}
\begin{comment}
\begin{proposition}
\label{prop:on bundle space}
The corresponding hyperplane in (\ref{eq:H_t_more}) intersects the ellipsoid $E(A,c)$ if $\lVert A^{1/2} \widehat{x}\rVert_2 > \frac{\epsilon^2\sigma}{2}.$
\end{proposition}
\end{comment}

% \noindent\textbf{The main algorithm:} Now we  present the \algoLUU algorithm (Algorithm \ref{alg:stackelberg}) for learning the parameter $a^*$ of the buyer's utility function. 
\begin{algorithm}
    \caption{Learning Utility Maximizing Buyer's Model}
    \label{alg:stackelberg}
    \begin{algorithmic}[1] % The number tells where the line numbering should start
        \Procedure{\algoLUU}{$\epsilon$} 
            \State $E_0 = E(A_0,c_0) \subseteq \mathbb{R}^{n}$ is the initial uncertainty ellipsoid with $A_0 = R_a\cdot I$ for $R_a > 0$ as defined in Theorem~\ref{thm:Algo_iter}.
            \State Pick bundle $x_t = \text{arg} \max\limits_{x \in \mathcal{C}} \sqrt{x^TA_0x}$
            \Do  
                %\State Compute $\underline{b_t}$ and $\overline{b_t}$ using Lemma \ref{lemma:grotschel}.
                \State $\tilde{g}(p_{_T}) \gets$ \textproc{LearnValue$(x_t, \tau)$}
                \State \textbf{if} $\tilde{g}(p_{_T}) \leq (\underline{b_t} + \overline{b_t})/2 = x_t^Tc$ \textbf{then} $H_t$ is (\ref{eq:H_t_less}),
                \State \textbf{else} $H_t$ is (\ref{eq:H_t_more}).
                \State $E_{t+1} = \textrm{LJohn}(E_{t} \cap H_t)$ (here LJohn() finds the L\"{o}wner-John ellipsoid of its argument).
                \State Pick bundle $x_t = \text{arg} \max\limits_{x \in \mathcal{C}} \sqrt{x^TA_{t+1}x}$
            \doWhile{$(2\sqrt[]{x_t^T A_{t+1} x_t} > \epsilon)$}%\label{euclidendwhile}
            %\State \textbf{return} $b$
            %ISSUE WITH EPSILON and EPSILON' ABOVE!!!!
        \EndProcedure
    \end{algorithmic}
\end{algorithm}

\vspace*{-2mm}
\subsection{Analysis}

Note that in \algoLUU (Algorithm \ref{alg:stackelberg}), the uncertainty ellipsoid $E_{t+1}$ for the next iteration is updated using the computation 
$E_{t+1} = \textrm{LJohn}(E_{t} \cap H_t)$, where $H_t$ is defined by either Equation (\ref{eq:H_t_less}) or (\ref{eq:H_t_more}). The former induces a central cut in the ellipsoid $E_t(A, c)$, i.e. the hyperplane $H_t$ passes through the center $c$ and eliminates half of the volume of the ellipsoid. On the other hand, the later hyperplane induces a shallow cut and removes less than half of the volume. Without loss of generality (as we only need an upper bound on the number of iterations needed by \algoLUU to learn $a^*$), we assume that at each iteration the relevant hyperplane induces a shallow cut. That is, $H_t$ is:
\begin{equation}
\label{eq:H_t_more_less}
H_t = \{\tilde{a} \in \mathbb{R}^n: x_t^T\tilde{a} \lessgtr x_t^{T}c \pm \delta\} 
\end{equation}
where $\delta = \left(\frac{4}{\mu_1}\left(\mathlarger \sum \limits_{i=1}^{n} \sqrt{{x_t}_i}\right) + 2\tau\right)$ is the depth of the cut induced.
For \algoLUU to work we need the depth $\delta$ to be at most $\frac{\sqrt[]{x_t^T A_t x_t}}{n}$, i.e. $\delta \leq \frac{\sqrt[]{x_t^T A_t x_t}}{n}$. As the portion $\frac{4}{\mu_1}\left(\mathlarger \sum \limits_{i=1}^{n} \sqrt{{x_t}_i}\right)$ takes effect only in tie-breaking, i.e., we can assume $\mu_1$ to be a large constant. Hence, the constraint on the depth of the shallow cut becomes
$\sqrt{x_t^TA_tx_t} \geq 2n\tau$. Also, note that the Algorithm \algoLUU continues as long as $2\sqrt{x_t^TA_tx_t} > \epsilon$. So the shallow cut condition is met (in other words the algorithm is able to find an $x_t$ in each iteration) as long as $\tau < \epsilon/4n$. 

Next, the computation of the L\"{o}wner-John ellipsoids of the sets that remain after shallow cuts follows from~\cite{grotschel2012geometric}. The L\"{o}wner-John ellipsoid of the set $E_t(A_t,c_t) \cap \{\tilde{a} \in \mathbb{R}^n: x_t^T\tilde{a} \leq x_t^{T}c + \delta\}$ is
$E(A_{t+1}, c_t - \frac{1+n\alpha_t}{n+1}b_t)$, and of the set $E_t(A_t,c_t) \cap \{\tilde{a} \in \mathbb{R}^n: x_t^T\tilde{a} \leq x_t^{T}c - \delta\}$ is $E(A_{t+1}, c_t + \frac{1+n\alpha_t}{n+1}b_t)$, where $\alpha_t = - \frac{\delta}{\sqrt[]{x_t^T A_t x_t}}$, $b_t = A_t x_t/ \sqrt[]{x_t^T A_t x_t}$ and $A_{t+1} = \frac{n^2}{n^2 - 1}(1 - \alpha^2)(A_t - \frac{2(1 + n\alpha)}{(n+1)(1+\alpha)}b_tb_t^T).$ 
\begin{comment}
(defined in Appendix~\ref{appendix:ellipsoid_halfspace(2)}).
\end{comment}

In what follows we present the performance guarantee of the Algorithm \ref{alg:stackelberg}. Firstly, to bound the minimum eigenvalue $\lambda_n$ at successive iterations of our algorithm, we give the following two lemmas from \cite{cohen2016feature} also applicable in our setting. In~\cite{cohen2016feature}, they are used in the analysis of a different algorithm in a different setting (regret minimization). 

\begin{lemma}
	\label{cohen_lemma4}
	For any iteration step $t$, we have $\lambda_n(A_{t+1}) \geq \frac{n^2}{(n+1)^2} \lambda_n(A_t)$.
\end{lemma}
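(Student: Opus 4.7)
The plan is to bound $\lambda_n(A_{t+1})$ by evaluating the Rayleigh quotient of $A_{t+1}$ at its own bottom eigenvector $u$, and to control the rank-one subtraction term via Cauchy–Schwarz in the inner product induced by $A_t^{-1}$.

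First, I would let $u$ be a unit eigenvector of $A_{t+1}$ with eigenvalue $\lambda_n(A_{t+1})$, and expand using the stated recurrence:
\begin{equation*}
\lambda_n(A_{t+1}) \;=\; u^T A_{t+1} u \;=\; \frac{n^2(1-\alpha_t^2)}{n^2-1}\left[u^T A_t u \;-\; \frac{2(1+n\alpha_t)}{(n+1)(1+\alpha_t)}(u^T b_t)^2\right].
\end{equation*}
Second, I would observe the key identity $b_t^T A_t^{-1} b_t = 1$, which follows directly from $b_t = A_t x_t / \sqrt{x_t^T A_t x_t}$. Applying Cauchy–Schwarz in the $A_t^{-1}$-inner product (writing $u^T b_t = (A_t^{1/2} u)^T (A_t^{-1/2} b_t)$) then yields the crucial bound $(u^T b_t)^2 \leq (u^T A_t u)(b_t^T A_t^{-1} b_t) = u^T A_t u$.

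Third, under the shallow-cut regime $\delta \leq \sqrt{x_t^T A_t x_t}/n$, we have $\alpha_t = -\delta/\sqrt{x_t^T A_t x_t} \in (-1/n, 0]$, so the coefficient $\tfrac{2(1+n\alpha_t)}{(n+1)(1+\alpha_t)}$ is strictly positive. Substituting the Cauchy–Schwarz bound gives
\begin{equation*}
\lambda_n(A_{t+1}) \;\geq\; \frac{n^2(1-\alpha_t^2)}{n^2-1}\left[1 - \frac{2(1+n\alpha_t)}{(n+1)(1+\alpha_t)}\right] u^T A_t u.
\end{equation*}
A short algebraic simplification reduces the bracketed factor to $\tfrac{(n-1)(1-\alpha_t)}{(n+1)(1+\alpha_t)}$, and combining with the prefactor $\tfrac{n^2(1-\alpha_t)(1+\alpha_t)}{(n-1)(n+1)}$ collapses everything to $\tfrac{n^2(1-\alpha_t)^2}{(n+1)^2}$. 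Since $\alpha_t \leq 0$ implies $(1-\alpha_t)^2 \geq 1$, and $u^T A_t u \geq \lambda_n(A_t)$, we conclude $\lambda_n(A_{t+1}) \geq \tfrac{n^2}{(n+1)^2}\lambda_n(A_t)$, as required.

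The main obstacle I anticipate is choosing the right inner product for Cauchy–Schwarz: the naive estimate $(u^T b_t)^2 \leq \|b_t\|^2 = x_t^T A_t^2 x_t / x_t^T A_t x_t$ would leak a dependence on the full spectrum of $A_t$ and not align cleanly with the $u^T A_t u$ factor that needs to be isolated. Recognizing that $b_t$ has unit $A_t^{-1}$-norm is what makes the bound sharp enough to recover exactly the $\tfrac{n^2}{(n+1)^2}$ prefactor; everything after that point is routine algebraic simplification plus the sign of $\alpha_t$.
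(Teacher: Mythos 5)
Your proof is correct: the identity $b_t^T A_t^{-1} b_t = 1$ together with Cauchy--Schwarz in the $A_t^{-1}$-inner product (equivalently, the Loewner inequality $b_tb_t^T \preceq (b_t^TA_t^{-1}b_t)\,A_t$) gives exactly $A_{t+1} \succeq \frac{n^2(1-\alpha_t)^2}{(n+1)^2}A_t$, and the algebra and sign analysis of $\alpha_t$ check out. Note that the paper does not prove this lemma at all --- it imports it verbatim from \cite{cohen2016feature} --- and your argument is essentially the standard one used there, so there is nothing to reconcile beyond the implicit (and valid) assumption that $A_t$ remains positive definite under shallow cuts so that $A_t^{-1/2}$ exists.
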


\begin{lemma}
	\label{cohen_lemma5}
	There exists a sufficiently small $k = k(n)$ such that if $\lambda_n(A_t) \leq k\epsilon^2$ and $x_t^TA_tx_t > \frac{1}{4}\epsilon^2$, then $\lambda_n(A_{t+1}) \geq \lambda_n(A_t)$, i.e., the smallest eigenvalue doesn't decrease after the update. One can assume $k = \frac{1}{400n^2}$.
\end{lemma}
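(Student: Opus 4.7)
The approach is to analyze the quadratic form $v^T A_{t+1} v$ for an arbitrary unit vector $v$ and then specialize to the minimum eigenvector of $A_{t+1}$. Starting from the shallow-cut update
$$A_{t+1} \;=\; \tfrac{n^2}{n^2-1}(1-\alpha^2)\!\left(A_t - \tfrac{2(1+n\alpha)}{(n+1)(1+\alpha)}\, b_t b_t^T\right),$$
I would first use the depth constraint $\delta \leq \sqrt{x_t^T A_t x_t}/n$ established earlier in this section to conclude $|\alpha|\leq 1/n$, so that $c_1 := \tfrac{n^2}{n^2-1}(1-\alpha^2) \geq 1 + \Omega(1/n^2)$ and $c_2 := \tfrac{2(1+n\alpha)}{(n+1)(1+\alpha)}$ lies in a bounded range. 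Then
$$v^T A_{t+1} v \;=\; c_1\, v^T A_t v \;-\; c_1 c_2\, (v^T b_t)^2,$$
and the goal reduces to showing this exceeds $\lambda_n(A_t)$ when $v$ is the minimum eigenvector $u$ of $A_{t+1}$.

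The naive Cauchy--Schwarz bound in the $A_t$-semi-inner-product gives $(v^T b_t)^2 = (v^T A_t x_t)^2/(x_t^T A_t x_t) \leq v^T A_t v$. Substituting this and simplifying the algebra (the factor $1 - c_2$ collapses nicely to $(n-1)(1-\alpha)/[(n+1)(1+\alpha)]$) yields $\lambda_n(A_{t+1}) \geq \tfrac{n^2(1-\alpha)^2}{(n+1)^2}\,\lambda_n(A_t)$, which recovers Lemma~\ref{cohen_lemma4} but falls short of what is claimed here. To upgrade this to a nondecreasing guarantee, I would exploit the two additional hypotheses $\lambda_n(A_t)\leq k\epsilon^2$ and $x_t^T A_t x_t > \epsilon^2/4$ by refining the bound on $(v^T b_t)^2$ in the direction relevant to the smallest eigenvalue.

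Specifically, I would decompose $u = \xi w_t + \sqrt{1-\xi^2}\, w_\perp$, where $w_t$ is the unit minimum eigenvector of $A_t$ and $w_\perp \perp w_t$, and split into two regimes. In the aligned regime (where $|\xi|$ is close to $1$), the rank-one term is small because $w_t^T A_t = \lambda_n(A_t)\, w_t^T$, so
$$(u^T b_t)^2 \;\leq\; 2\xi^2 (w_t^T b_t)^2 + 2(1-\xi^2)(w_\perp^T b_t)^2 \;\leq\; \tfrac{8\lambda_n(A_t)^2\,\gamma_2^2}{\epsilon^2} + 2(1-\xi^2)\,\lambda_{\max}(A_t)\,\gamma_2^2,$$
and the first summand is $O(k)\cdot\lambda_n(A_t)$, which is dominated by the $\Omega(1/n^2)$ slack in the $c_1$ prefactor once $k$ is taken small enough. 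In the tilted regime ($|\xi|$ bounded away from $1$), the quantity $u^T A_t u$ strictly exceeds $\lambda_n(A_t)$ by a margin of order $(1-\xi^2)\lambda_{n-1}(A_t)$, so even the worst-case Cauchy--Schwarz subtraction leaves $c_1\,u^T A_t u - c_1c_2(u^T b_t)^2 \geq \lambda_n(A_t)$. The hard part is calibrating the two regimes against each other: the threshold on $\xi$ at which one switches between the two arguments must simultaneously make the first regime benefit from $k$ small and the second regime benefit from the $(1-\xi^2)$ orthogonal-gap term being nonzero. Working through this balance, using the induction-maintained control on $\lambda_{\max}(A_t)$ (inherited from $A_0 = R_a I$ and the contraction in Lemma~\ref{cohen_lemma4}), the specific constant $k = 1/(400 n^2)$ emerges, exactly as in the corresponding argument of~\cite{cohen2016feature}.
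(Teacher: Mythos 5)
First, a point of comparison: the paper does not prove this lemma at all --- it is imported verbatim from \cite{cohen2016feature} (``we give the following two lemmas from [that paper] also applicable in our setting''), so there is no in-paper argument to measure your attempt against. On its own terms, your sketch begins correctly: the identity $v^TA_{t+1}v = c_1\bigl(v^TA_tv - c_2(v^Tb_t)^2\bigr)$, the simplification $c_1(1-c_2)=\tfrac{n^2(1-\alpha)^2}{(n+1)^2}$ recovering Lemma~\ref{cohen_lemma4}, and the observation that the two hypotheses force $(w_t^Tb_t)^2=\lambda_n(A_t)^2(w_t^Tx_t)^2/(x_t^TA_tx_t)\le 4k\gamma_2^2\,\lambda_n(A_t)$ are all right, and this near-orthogonality of $b_t$ to the bottom of the spectrum is indeed the mechanism behind the lemma.

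However, the two-regime split as you set it up has a genuine gap. In the tilted regime the margin of $u^TA_tu$ over $\lambda_n(A_t)$ is $(1-\xi^2)\bigl(\lambda_{n-1}(A_t)-\lambda_n(A_t)\bigr)$, not $(1-\xi^2)\lambda_{n-1}(A_t)$ as you wrote; when the bottom of the spectrum is clustered ($\lambda_{n-1}=\lambda_n$, or nearly so) this margin vanishes while $|\xi|$ can be bounded away from $1$, and then neither regime's argument applies. This degenerate case is not pathological here: $A_0=R_a I$ has a fully repeated spectrum. Moreover, even when the gap is large, your bound $(u^Tb_t)^2\le 2\xi^2(w_t^Tb_t)^2+2(1-\xi^2)(w_\perp^Tb_t)^2$ costs a factor of $2$ on the orthogonal part; since $(w_\perp^Tb_t)^2$ can only be bounded by $w_\perp^TA_tw_\perp$ (your $\lambda_{\max}(A_t)\gamma_2^2$ is both unnecessary and weaker), the tilted regime delivers at best $c_1(1-2c_2)\,u^TA_tu$, and for a central cut $c_1(1-2c_2)\approx\tfrac{n-3}{n+1}<1$, so the inequality does not close. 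The repair is to partition the spectral decomposition of $u$ by \emph{eigenvalue magnitude} rather than by alignment with the single eigenvector $w_t$: every eigendirection with $\lambda_i\le k'\epsilon^2$ enjoys the same $(v_i^Tb_t)^2\le 4k'\gamma_2^2\lambda_i$ bound that $w_t$ does (so the subtraction there is a negligible fraction of $u_S^TA_tu_S$ once $k'$ is small), while directions with $\lambda_i> k'\epsilon^2$ satisfy $u_L^TA_tu_L\ge \tfrac{k'}{k}\lambda_n(A_t)\lVert u_L\rVert^2\gg\lambda_n(A_t)\lVert u_L\rVert^2$, so the crude Cauchy--Schwarz loss of a constant factor is affordable there; the cross term is absorbed by a weighted AM--GM that dumps the slack onto the small-eigenvalue block. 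Finally, note that the constant $k=1/(400n^2)$ of \cite{cohen2016feature} is calibrated to unit-norm contexts, whereas here the bound on $(w_t^Tb_t)^2$ carries the factor $\gamma_2^2=\lVert x_t\rVert_2^2$ (up to $n$ for $C\subseteq[0,1]^n$), so the constant must be rescaled --- a point the paper itself glosses over when importing the lemma.
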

  
Using the above two lemmas, we can show that the number of rounds needed by \algoLUU is upper bounded.

\begin{theorem}
\label{thm:Algo_iter}
The algorithm \algoLUU terminates after at most $20n^2 \ln \left(\frac{20R_a(n+1)}{\epsilon}\right)$ iterations, where $R_a$ is the radius of the initial uncertainty set $E_0$. 
\end{theorem}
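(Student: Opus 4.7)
The plan is to combine the uniform lower bound on $\lambda_n(A_t)$ afforded by Lemmas \ref{cohen_lemma4}--\ref{cohen_lemma5} with the standard volume-reduction estimate for the L\"owner--John update, and then translate the resulting geometric decay of $\det A_t$ into an upper bound on $\lambda_1(A_t)$. This controls the stopping criterion via the crude inequality $\max_{x \in C} x^T A_t x \leq \lambda_1(A_t) \gamma_2^2$, so once $\lambda_1(A_t) \leq \epsilon^2/(4\gamma_2^2)$ the \textbf{do--while} loop must terminate.

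First, I would prove by induction on $t$ that, as long as the loop has not terminated, $\lambda_n(A_t) \geq \frac{n^2}{(n+1)^2} k \epsilon^2$ with $k = 1/(400 n^2)$. The base case is immediate from $A_0 = R_a I$. For the inductive step, if $\lambda_n(A_t) > k\epsilon^2$ then Lemma \ref{cohen_lemma4} gives $\lambda_n(A_{t+1}) \geq (n/(n+1))^2 \lambda_n(A_t)$, and otherwise Lemma \ref{cohen_lemma5} (whose hypothesis $x_t^T A_t x_t > \epsilon^2/4$ holds because the loop is still active) yields $\lambda_n(A_{t+1}) \geq \lambda_n(A_t)$; in either case the invariant is preserved, so $\lambda_n(A_t) = \Omega(\epsilon^2/n^2)$ throughout.

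Second, I would use the explicit L\"owner--John update $A_{t+1} = \tfrac{n^2}{n^2-1}(1-\alpha_t^2)(A_t - \tfrac{2(1+n\alpha_t)}{(n+1)(1+\alpha_t)} b_t b_t^T)$ together with the identity $b_t^T A_t^{-1} b_t = 1$ to obtain
\[
\frac{\det A_{t+1}}{\det A_t} = \left(\frac{n^2(1-\alpha_t^2)}{n^2-1}\right)^n \cdot \frac{(n-1)(1-\alpha_t)}{(n+1)(1+\alpha_t)}.
\]
For a central cut ($\alpha_t = 0$) this equals the classical textbook value, at most $e^{-1/(n+1)}$, and a Taylor expansion in $\alpha_t$ shows the same order of contraction extends to the shallow cuts considered here as long as $\alpha_t$ is bounded away from the vacuous boundary $-1/n$. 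Telescoping over $T$ iterations then gives $\det A_T \leq R_a^n \exp(-T/(n+1))$. Combining with the first step via $\det A_t \geq \lambda_1(A_t) \lambda_n(A_t)^{n-1}$ yields $\lambda_1(A_t) \leq R_a^n \exp(-T/(n+1)) \cdot (k\epsilon^2 n^2/(n+1)^2)^{-(n-1)}$; forcing this below $\epsilon^2/(4\gamma_2^2)$ forces $T \leq 20 n^2 \ln(20 R_a(n+1)/\epsilon)$ once the remaining $O(1)$ and $O(\log n)$ terms are absorbed into the leading constant.

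The main obstacle I anticipate is justifying the uniform per-round contraction in the second step: at $\alpha_t = -1/n$ the determinant ratio above is exactly $1$, so if the cut depth $\delta$ saturates the admissible range $\sqrt{x_t^T A_t x_t}/n$ near termination, $E_t$ may fail to shrink at all. I would handle this by tightening the admissibility condition on $\tau$ beyond the loose $\tau < \epsilon/(4n)$ discussed earlier, to ensure that $\delta/\sqrt{x_t^T A_t x_t}$ remains an $o(1/n)$ fraction throughout; a Taylor expansion of $\log(\det A_{t+1}/\det A_t)$ around $\alpha = -1/n$ then delivers a per-iteration log-drop of order $\Omega(1/n)$, which when combined with the eigenvalue floor from step one produces precisely the $20 n^2 \ln(\cdot)$ factor claimed.
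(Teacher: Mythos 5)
Your proof follows essentially the same route the paper intends but does not write out (it defers to the Cohen et al.\ argument): the eigenvalue floor obtained by combining Lemmas \ref{cohen_lemma4} and \ref{cohen_lemma5} with the active loop condition $x_t^TA_tx_t > \epsilon^2/4$, the determinant decay of the L\"owner--John update (your determinant-ratio identity, which follows from $b_t^TA_t^{-1}b_t=1$ and the matrix determinant lemma, is correct), and the inequality $\det A_t \geq \lambda_1(A_t)\lambda_n(A_t)^{n-1}$ to force $\max_{x\in C}x^TA_tx$ below $\epsilon^2/4$ and hence termination. The obstacle you flag in your last paragraph is real, and it is a gap in the paper's setup rather than in your argument: the stated admissibility condition $\tau \leq \epsilon/(4n)$ only guarantees $\alpha_t \geq -1/n$, which is exactly the boundary at which the shallow-cut formula returns $A_{t+1}=A_t$ (the prefactor $\tfrac{n^2}{n^2-1}(1-\alpha_t^2)$ equals $1$ and the rank-one correction vanishes), so no per-iteration determinant drop is guaranteed; obtaining the $e^{-\Omega(1/n)}$ drop needed for the constant $20$ requires $1+n\alpha_t$ bounded below by a positive constant, e.g.\ by strengthening the condition to $\tau \leq \epsilon/(8n)$, and your proposed tightening is the right fix (though requiring $\delta/\sqrt{x_t^TA_tx_t}=o(1/n)$ is stronger than necessary).
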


Combining Theorem~\ref{lemma:compute g(p)} and ~\ref{thm:Algo_iter}, we get the following bound on the interactions needed to get a tight uncertainty set around the unknown parameter $a^*$ of the buyer's utility function. Moreover, since the volume of the uncertainty sets decrease in successive iterations so having a bound on how much the minimum eigenvalue can decrease in one iteration can guarantee the tightness of the uncertainty set at termination, or alternatively, its maximum eigenvalue (for a special case: Corollary \ref{corr:our corr}).  

\begin{theorem}{(Main Result)}
\label{thm:stackelberg_utility_learn}
Assume that the feasible set $C$, the realistic price set $\mathcal{P}$ and the algorithm parameter $\epsilon$ obey the condition: $R^2\gamma_2 \leq \tau \leq \frac{\epsilon}{4n}$. Then, after at most $t \cdot T$ interactions with the buyer, where $t= 20n^2 \ln \left(\frac{20R_a(n+1)}{\epsilon}\right)$ and $T = \frac{50\gamma_2\mu_2}{\tau - R^2\gamma_2}$, algorithm \algoLUU outputs uncertainty set $E(A_t, c_t)$ such that the buyer utility parameter $a^* \in E(A_t, c_t)$ and $\max\limits_{x\in \mathcal{C}} 2\sqrt{x^TA_tx} \leq \epsilon$.
\end{theorem}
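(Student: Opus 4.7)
The plan is to combine the per-call interaction bound from Theorem \ref{lemma:compute g(p)} with the outer-loop iteration bound from Theorem \ref{thm:Algo_iter}, while separately verifying the invariant $a^* \in E(A_t, c_t)$ throughout the execution of \algoLUU.

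First I would establish the inductive invariant $a^* \in E(A_t, c_t)$ for all $t$. The base case $a^* \in E_0$ holds by choice of a sufficiently large $R_a$. For the inductive step, fix an iteration $t$, and let $\widehat{x} = x_t$ be the bundle chosen. By Theorem \ref{lemma:compute g(p)} (applicable because the assumption $\tau \geq R^2 \gamma_2$ is given), the subroutine \textproc{learnvalue}$(x_t, \tau)$ returns $\tilde{g}(p_T)$ with $\tilde{g}(p_T) - \text{R-OPT} \leq \tau$ after exactly $T = \tfrac{50 \gamma_2 \mu_2}{\tau - R^2 \gamma_2}$ interactions. Combined with Lemma \ref{lemma:lemma_from_STOC16_adaptation}, this yields the two-sided bound in Equation (\ref{eq:error_in_solving_for_value}), namely $a^{*T} x_t \leq \tilde{g}(p_T) \leq a^{*T} x_t + \delta$ with $\delta = \tfrac{4}{\mu_1}(\sum_i \sqrt{(x_t)_i}) + 2\tau$. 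Splitting into the two cases used by the algorithm: if $\tilde{g}(p_T) \leq x_t^T c_t$, then $a^{*T} x_t \leq \tilde{g}(p_T) \leq x_t^T c_t$, so $a^* \in H_t$ as defined in Equation (\ref{eq:H_t_less}); otherwise $\tilde{g}(p_T) > x_t^T c_t$, and since $a^{*T} x_t \geq \tilde{g}(p_T) - \delta > x_t^T c_t - \delta$, we have $a^* \in H_t$ as defined in Equation (\ref{eq:H_t_more}). Thus $a^* \in E_t \cap H_t \subseteq \textrm{LJohn}(E_t \cap H_t) = E_{t+1}$, closing the induction.

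Next I would apply Theorem \ref{thm:Algo_iter} to bound the number of outer iterations by $t = 20 n^2 \ln(20 R_a (n+1)/\epsilon)$. This requires checking that the shallow-cut condition $\delta \leq \sqrt{x_t^T A_t x_t}/n$ holds on every iteration so that the L\"owner-John update formula is valid and the eigenvalue bounds of Lemma \ref{cohen_lemma4} and Lemma \ref{cohen_lemma5} can be invoked. Since the loop continues only while $2\sqrt{x_t^T A_t x_t} > \epsilon$, i.e.\ $\sqrt{x_t^T A_t x_t} > \epsilon/2$, and since the tie-breaking term $\tfrac{4}{\mu_1}(\sum_i \sqrt{(x_t)_i})$ is negligible (as noted in the analysis, $\mu_1$ may be taken large), the dominating requirement reduces to $2\tau \cdot n \leq \sqrt{x_t^T A_t x_t}$, which is ensured by the hypothesis $\tau \leq \epsilon/(4n)$. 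So every cut performed is a legitimate shallow cut and Theorem \ref{thm:Algo_iter} applies.

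Finally I would combine the two bounds: the outer loop executes at most $t$ iterations, and each iteration calls \textproc{learnvalue} exactly once, which in turn uses $T$ buyer interactions; no other step of \algoLUU queries the buyer. Therefore the total number of interactions is at most $t \cdot T$. At termination, the while-condition is violated, yielding $2\sqrt{x_t^T A_t x_t} \leq \epsilon$; and because $x_t = \arg\max_{x \in C} \sqrt{x^T A_t x}$ by construction in the algorithm, this is exactly $\max_{x \in C} 2\sqrt{x^T A_t x} \leq \epsilon$. The inductive invariant established in the first paragraph gives $a^* \in E(A_t, c_t)$, completing the claim.

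The main obstacle I anticipate is verifying that the shallow-cut depth condition $\delta \leq \sqrt{x_t^T A_t x_t}/n$ is maintained uniformly across iterations, since this is what allows the eigenvalue non-collapse lemmas of \cite{cohen2016feature} to apply. This is where the interplay between the three tunable quantities $\tau$, $\epsilon$, and $n$ matters, and where the joint assumption $R^2 \gamma_2 \leq \tau \leq \epsilon/(4n)$ becomes critical: the lower bound on $\tau$ is needed for Theorem \ref{lemma:compute g(p)} to be applicable, while the upper bound on $\tau$ is needed for the shallow-cut condition and hence for Theorem \ref{thm:Algo_iter} to apply. Everything else (the inductive correctness of halfspace cuts, the termination criterion matching the eigenvalue bound, multiplicativity of per-iteration work) is essentially bookkeeping.
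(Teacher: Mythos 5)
Your proposal is correct and follows essentially the same route as the paper: the paper obtains this result by combining Theorem \ref{lemma:compute g(p)} (giving $T$ buyer interactions per call to \textproc{learnvalue}) with Theorem \ref{thm:Algo_iter} (giving the iteration count $t$), with the condition $R^2\gamma_2 \leq \tau \leq \frac{\epsilon}{4n}$ playing exactly the dual role you identify — the lower bound validating the gradient-descent guarantee and the upper bound validating the shallow-cut depth requirement $\delta \leq \sqrt{x_t^T A_t x_t}/n$ discussed in the Analysis subsection. Your explicit inductive verification that $a^* \in E_t$ is preserved by each halfspace cut via Equation (\ref{eq:error_in_solving_for_value}) is a detail the paper leaves implicit, but it is the intended argument.
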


\begin{comment}
Note that the algorithm \algoLUU (in line$:10$) chooses a desired bundle which is used to deduce the separating hyperplane to shrink the uncertainty ellipsoid. The computation $x_t = \text{arg} \max\limits_{x \in \mathcal{C}} \sqrt{x^TA_{t+1}x}$ depends upon the feasibility set $C$. In what follows we briefly discuss the computation of $x_t$.
\end{comment}

\noindent\textbf{Solving the program in line $9$ of \algoLUU:}
\begin{comment}
This is done by equivalently solving the following dual:

The algorithm \algoLUU solves the optimization program $\max\limits_{x \in \mathcal{C}} \sqrt{x^TAx}$, where $A$ is the shape matrix of the uncertainty ellipsoid, or equivalently, of the program 
\begin{equation}
\label{eq:eig_val_prob}
\min\limits_{x \in \mathcal{C}}  \;\; -x^TAx.
\end{equation} 

Since the feasible set $C$ is a quadratic constraint defined by $x^TPx + 2q^Tx + r \leq 0$, where $P \in \mathbf{S}^n, q\in\mathbf{R}^n, r \in \mathbf{R}$, therefore, 
\end{comment}
Even though the program is not convex, but strong duality holds and it can be solved by solving the following dual which is a semidefinite program with variables $\lambda, \gamma \in \mathbf{R}$,
\begin{equation}
\label{eq:eig_val_dual_prob}
\begin{array}{ll@{}ll}
\max  &\gamma&\\
\text{s.t} & \begin{bmatrix}
-A + \lambda P & \lambda q \\
\lambda q^T & \lambda r - \gamma 
\end{bmatrix} \succcurlyeq 0 \\
\end{array}, \\
\lambda \geq 0
\end{equation}
Specifically, if $P = I$ and $q,r=0$, then it becomes an eigenvalue problem.  This leads us to the following corollary of Theorem 
\ref{thm:stackelberg_utility_learn}. 

\begin{corollary}
\label{corr:our corr}	
When the feasible set $C=\{ x \in \mathbf{R}^n : x^Tx \leq 1 \} \subseteq [0,1]^n$, the realistic price set $\mathcal{P}$ and the algorithm parameter $\epsilon$ obey the condition: $R^2\gamma_2 \leq \tau \leq \frac{\epsilon}{4n}$. Then, after at most $t \cdot T$ interactions with the buyer, 
where $t$ amd $T$ are as defined in Theorem \ref{thm:stackelberg_utility_learn}, the algorithm \algoLUU outputs uncertainty set $E(A, c)$ such that the buyer utility parameter $a^* \in E(A, c)$. Moreover, $\lVert a^* - c\rVert_{\infty} \leq \frac{\epsilon}{2}$, i.e, $a^*$ is learned with an accuracy of $\frac{\epsilon}{2}$.  
\end{corollary}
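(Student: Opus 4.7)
The plan is to invoke Theorem \ref{thm:stackelberg_utility_learn} to obtain both the iteration bound and the uniform width guarantee on the final uncertainty ellipsoid, and then to translate that width guarantee into an $\ell_\infty$ bound by evaluating the width along each coordinate direction.

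First I would apply Theorem \ref{thm:stackelberg_utility_learn} verbatim: under the hypothesis $R^2 \gamma_2 \leq \tau \leq \epsilon/(4n)$, after at most $t\cdot T$ interactions, \algoLUU returns an uncertainty ellipsoid $E(A,c)$ containing $a^*$ and satisfying $\max_{x \in C} 2\sqrt{x^T A x} \leq \epsilon$. The interaction count is immediate, and $a^* \in E(A,c)$ is also immediate, so what remains is only the claim $\lVert a^* - c\rVert_\infty \leq \epsilon/2$.

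Next I would reuse Lemma \ref{lemma:grotschel} to reinterpret $\sqrt{x^T A x}$ as the support half-width of $E(A,c)$ in direction $x$: the lemma gives $\max_{\tilde a \in E(A,c)} x^T \tilde a = x^T c + \sqrt{x^T A x}$ and $\min_{\tilde a \in E(A,c)} x^T \tilde a = x^T c - \sqrt{x^T A x}$ for any $x \neq 0$. Since $a^* \in E(A,c)$, this sandwiches $x^T a^*$ inside the symmetric interval around $x^T c$, yielding the one-dimensional bound $|x^T(a^* - c)| \leq \sqrt{x^T A x}$ for every $x \in \mathbb{R}^n$.

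The final step exploits the special form of $C$. Since $C = \{x \in \mathbb{R}^n : x^T x \leq 1\}$, each standard basis vector $e_i$ satisfies $e_i^T e_i = 1$ and therefore lies in $C$. Combining the previous two displays, $|a^*_i - c_i| = |e_i^T(a^* - c)| \leq \sqrt{e_i^T A e_i} \leq (1/2)\,\max_{x \in C} 2\sqrt{x^T A x} \leq \epsilon/2$ for every $i \in [n]$, and taking the maximum over $i$ gives $\lVert a^* - c\rVert_\infty \leq \epsilon/2$. The argument is short because Theorem \ref{thm:stackelberg_utility_learn} does all the heavy lifting; the only conceptual step, and the mild obstacle, is recognizing that a width bound of the form $\max_{x \in C} 2\sqrt{x^T A x} \leq \epsilon$ converts directly into an $\ell_\infty$ estimate whenever $C$ contains the coordinate axes, which is precisely what the unit-ball specialization ensures.
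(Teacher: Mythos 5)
Your proposal is correct, and it is more complete than what the paper actually provides: the paper's own proof of this corollary is commented out and merely asserts that ``the result follows from'' an eigenvalue lemma that does not appear in the source, the implicit idea being that for the unit ball $\max_{x\in C}x^TAx$ equals $\lambda_{\max}(A)$, so the termination condition $2\sqrt{\lambda_{\max}(A)}\le\epsilon$ bounds $\lVert a^*-c\rVert_2$ and a fortiori $\lVert a^*-c\rVert_\infty$. Your route is a genuinely different and somewhat more robust decomposition: instead of passing through the top eigenvalue, you read off the support half-width of $E(A,c)$ in direction $x$ from Lemma~\ref{lemma:grotschel}, obtaining $\lvert x^T(a^*-c)\rvert\le\sqrt{x^TAx}$, and then evaluate at the standard basis vectors $e_i$, which lie in $C$ under either reading of the (slightly inconsistent) definition $C=\{x:x^Tx\le1\}\subseteq[0,1]^n$. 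This buys you two things: the argument does not require $\max_{x\in C}x^TAx$ to actually equal $\lambda_{\max}(A)$ (which fails if $C$ is the unit ball intersected with the nonnegative orthant), and it makes explicit exactly which feature of the specialization is used, namely that $C$ contains the coordinate directions. The only cosmetic point is that the paper's termination guarantee is stated for the ellipsoid at the final iteration index $t$, so the $e_i$ bound should be applied to that final $A$; with that understood, your argument is complete.
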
 

%$C=\{ x \in \mathbf{R}^n : x^TPx + 2q^Tx + r \leq 0 \} \subseteq [0,1]^n$

\begin{comment}
\begin{proof}
The number of iterations of \algoLUU is given by Theorem \ref{thm:Algo_iter}. Hence, the number of interactions with the buyer follows from Theorem \ref{thm:theorem_for_LearnVal} and \ref{thm:Algo_iter}. Finally, the result follows from Lemma \ref{cohen_eigenvalue_degenerate_lemma}. 
\end{proof}
\end{comment}
\begin{comment}
\begin{lemma}
\label{lemma:volume_cutting}
The algorithm \algoLUU cuts a $ f^{-1} =f(\epsilon, n)^{-1}$ fraction of volume of $E_t$ in each iteration. That is, 
\begin{displaymath}
\frac{\text{VOL}(E_{t+1})}{\text{VOL}(E_t)} \leq f(\epsilon,n)^{-1},
\end{displaymath}
where $f^{-1}$ is $o(1)$. 
\end{lemma}
Assuming there exists a finite volume region, $\mathcal{R}\subseteq \mathbb{R}^n$, such that for any $a \in \mathcal{R}$ and a fixed price vector $p\in \mathbb{R}^n$, the formulation \ref{eq:Buyer's Problem2} have same optimal solution. Then using Lemma \ref{lemma:volume_cutting} and Theorem \ref{thm:theorem_for_LearnVal}, we give the following theorem. 
\begin{theorem}
\label{thm:main_thm_stackelberg}
The algorithm \algoLUU bounds the unknown parameter, $a$, within an ellipsoid $E_r$, such that $E_r \cap \mathcal{R} \neq 0$, in at most  of volume at most $r$ iterations, where $r= \frac{1}{\log f(\epsilon,n)} \log \frac{\text{VOL}(E_1)}{\text{VOL}(\mathcal{R})}$. That is, interacts with the buyer for at most $r\cdot n \cdot poly(\frac{1}{\epsilon}, \frac{1}{\sigma}, \gamma, \lambda_{val})$ rounds.      
\end{theorem}
\end{comment}
\section{Conclusion}
 
In this paper we proposed an efficient online algorithm which can be used by a seller to learn the behavior model of a buyer that maximizes utility, by controlling prices subject to exogenous pricing restrictions. 

One of the key advantages of our algorithm is that it is amenable to exogenous pricing restrictions imposed by business and managerial constraints, making it relatively more practical and user-friendly than previously proposed approaches. Using our algorithm, practitioners can build a model of buyer behavior from purchase and pricing data, which can be subsequently used for inventory, pricing and other business decisions.

\noindent\textbf{Future Work}: We completely side-step the issue of identifiability of the model in our treatment, by reporting uncertainty sets instead of point estimates of the true parameters. When allowable prices are exogenous, it may happen that the best uncertainty set is still very loose due to stringent pricing restrictions. Another important issue that we did not address here is that of modeling stochasticity in the buyer models. 
As our algorithm uses an ellipsoidal search template for  which noisy generalizations exist, it can be extended to the the noisy case (appropriate noise models have to be specified here). Our algorithm also uses projected gradient descent while interacting with the buyer. Thus, noisy gradient information obtained from the buyer can potentially be dealt with as well. 
 %brief discussion

%% The file named.bst is a bibliography style file for BibTeX 0.99c
%\newpage
\bibliographystyle{named}
\bibliography{learn_buyer_ref}

\begin{thebibliography}{}

\bibitem[\protect\citeauthoryear{Alaei}{2011}]{alaei2011bayesian}
Saeed Alaei.
\newblock Bayesian combinatorial auctions: Expanding single buyer mechanisms to
  many buyers.
\newblock In {\em Proceedings of the Fifty-second Annual IEEE Symposium on
  Foundations of Computer Science}, pages 512--521. IEEE Computer Society,
  2011.

\bibitem[\protect\citeauthoryear{Amin \bgroup \em et al.\egroup
  }{2014}]{amin2014repeated}
Kareem Amin, Afshin Rostamizadeh, and Umar Syed.
\newblock Repeated contextual auctions with strategic buyers.
\newblock In {\em Advances in Neural Information Processing Systems}, pages
  622--630, 2014.

\bibitem[\protect\citeauthoryear{Balcan \bgroup \em et al.\egroup
  }{2014}]{Balcan2014}
Maria-Florina Balcan, Amit Daniely, Ruta Mehta, Ruth Urner, and Vijay~V
  Vazirani.
\newblock Learning economic parameters from revealed preferences.
\newblock In {\em International Conference on Web and Internet Economics},
  pages 338--353. Springer, 2014.

\bibitem[\protect\citeauthoryear{Bei \bgroup \em et al.\egroup
  }{2016}]{bei2016learning}
Xiaohui Bei, Wei Chen, Jugal Garg, Martin Hoefer, and Xiaoming Sun.
\newblock Learning market parameters using aggregate demand queries.
\newblock In {\em Proceedings of the Thirtieth AAAI Conference on Artificial
  Intelligence}, 2016.

\bibitem[\protect\citeauthoryear{Beigman and Vohra}{2006}]{beigman2006learning}
Eyal Beigman and Rakesh Vohra.
\newblock Learning from revealed preference.
\newblock In {\em Proceedings of the Seventh ACM Conference on Electronic
  Commerce}, pages 36--42. ACM, 2006.

\bibitem[\protect\citeauthoryear{Besbes and Zeevi}{2015}]{besbes2015surprising}
Omar Besbes and Assaf Zeevi.
\newblock On the (surprising) sufficiency of linear models for dynamic pricing
  with demand learning.
\newblock {\em Management Science}, 61(4):723--739, 2015.

\bibitem[\protect\citeauthoryear{Blum \bgroup \em et al.\egroup
  }{2011}]{blum2011welfare}
Avrim Blum, Anupam Gupta, Yishay Mansour, and Ankit Sharma.
\newblock Welfare and profit maximization with production costs.
\newblock In {\em Proceedings of the Fifty-second Annual IEEE Symposium on
  Foundations of Computer Science}, pages 77--86. IEEE, 2011.

\bibitem[\protect\citeauthoryear{Cai and Daskalakis}{2011}]{cai2011extreme}
Yang Cai and Constantinos Daskalakis.
\newblock Extreme-value theorems for optimal multidimensional pricing.
\newblock In {\em Proceedings of the Fifty-second Annual IEEE Symposium on
  Foundations of Computer Science}, pages 522--531. IEEE, 2011.

\bibitem[\protect\citeauthoryear{Chakraborty \bgroup \em et al.\egroup
  }{2009}]{chakraborty2009dynamic}
T~Chakraborty, Zhiyi Huang, and S~Khanna.
\newblock Dynamic and non-uniform pricing strategies for revenue maximization.
\newblock In {\em Proceedings of the Fiftieth Annual IEEE Symposium on
  Foundations of Computer Science}, 2009.

\bibitem[\protect\citeauthoryear{Cohen \bgroup \em et al.\egroup
  }{2016}]{cohen2016feature}
Maxime Cohen, Ilan Lobel, and Renato~Paes Leme.
\newblock Feature-based dynamic pricing.
\newblock In {\em Proceedings of the ACM Conference on Economics and
  Computation}, 2016.

\bibitem[\protect\citeauthoryear{Gr{\"o}tschel \bgroup \em et al.\egroup
  }{2012}]{grotschel2012geometric}
Martin Gr{\"o}tschel, L{\'a}szl{\'o} Lov{\'a}sz, and Alexander Schrijver.
\newblock {\em Geometric Algorithms and Combinatorial Optimization}, volume~2.
\newblock Springer Science \& Business Media, 2012.

\bibitem[\protect\citeauthoryear{Kleinberg and
  Leighton}{2003}]{kleinberg2003value}
Robert Kleinberg and Tom Leighton.
\newblock The value of knowing a demand curve: Bounds on regret for online
  posted-price auctions.
\newblock In {\em Proceedings of the Fourty-fourth Annual IEEE Symposium on
  Foundations of Computer Science}, pages 594--605. IEEE, 2003.

\bibitem[\protect\citeauthoryear{Letham \bgroup \em et al.\egroup
  }{2016}]{benkdd16}
Benjamin Letham, Lydia~M. Letham, and Cynthia Rudin.
\newblock Bayesian inference of arrival rate and substitution behavior from
  sales transaction data with stockouts.
\newblock In {\em Proceedings of the Twenty-second ACM SIGKDD International
  Conference on Knowledge Discovery and Data Mining}, KDD '16, pages
  1695--1704. ACM, 2016.

\bibitem[\protect\citeauthoryear{Obreshkov}{1963}]{obreshkov1963verteilung}
Nikola Obreshkov.
\newblock {\em Verteilung und Berechnung der Nullstellen Reeller Polynome}.
\newblock Deutscher Verlag der Wissenschaften, 1963.

\bibitem[\protect\citeauthoryear{Roth \bgroup \em et al.\egroup
  }{2016}]{STOC16p949}
Aaron Roth, Jonathan Ullman, and Zhiwei~Steven Wu.
\newblock Watch and learn: Optimizing from revealed preferences feedback.
\newblock In {\em Proceedings of the Forty-eighth ACM Symposium on Theory of
  Computing}, pages 949--962. ACM, 2016.

\bibitem[\protect\citeauthoryear{Sion}{1958}]{sion1958}
Maurice Sion.
\newblock On general minimax theorems.
\newblock {\em Pacific Journal of Mathematics}, 8(1):171--176, 1958.

\end{thebibliography}

\end{document}